%% file: main.tex
\documentclass{article} 
\usepackage{iclr2027_conference,times}
\usepackage{graphicx}
\usepackage{subcaption}
\usepackage{float}
\usepackage{algorithm}
\usepackage{algorithmic}
\usepackage{amsmath,amssymb}
\usepackage{amsthm}

\newtheorem{theorem}{Theorem}
\newtheorem{proposition}{Proposition}

\input{math_commands.tex}

 \usepackage{caption} 
\usepackage{hyperref}
\usepackage{url}
\usepackage{tabularx}
\usepackage{booktabs,multirow,threeparttable,graphicx}
\title{Latency and accuracy tradeoffs in Spiking Neural Networks}

\author{Zhanglu Yan$^{1}$, Zixuan Zhu$^{1,3,4}$, Kaiwen Tang$^{1}$,Yuyang Cai$^{2}$, Qianhui Liu$^{2}$, Weng-Fai Wong$^{1}$\\[0.5em]
\normalfont\small $^{1}$Department of computer science, National University of Singapore\\
\normalfont\small $^{2}$School of Artificial Intelligence, Shandong University\\
\normalfont\small $^{3}$Shanghai Advanced Research Institute, Chinese Academy of Science\\
\normalfont\small $^{4}$University of Chinese Academy of Sciences, Beijing
}

\usepackage{etoolbox}
\makeatletter
\patchcmd{\@maketitle}
  {Under review as a conference paper at ICLR 2027}
  {Preprint}{}{\errmessage{Failed to replace the review header}}
\patchcmd{\@maketitle}
  {Anonymous authors\\Paper under double-blind review}
  {\@author}{}{\errmessage{Failed to display the authors}}
\makeatother

\renewcommand{\iclrruler}[1]{}
\begin{document}

\maketitle

\begin{abstract}

Spiking neural networks (SNNs) are attractive for low-power speech command recognition, yet their latency has received far less attention than their energy efficiency, and their multi-timestep execution is widely assumed to make them slower than quantized neural networks (QNNs). This paper challenges the assumption that more local timesteps necessarily imply higher network latency. By overlapping computation across adjacent layers at the timestep level, SNNs may complete execution in less time than comparable bit-serial QNNs. However, this overlap relies on spikes firing on incomplete inputs, and a spike once generated cannot be withdrawn, so its error persists and reduces accuracy. Waiting for more input before firing would seem to improve accuracy at the cost of reduced overlap. Yet we find and prove that this intuition fails at some layers, where even a small increase in waiting can change spike timing and downstream computation, making the network both slower and less accurate. 
We therefore propose a Pipeline Delay Search (PDS) method which selects each layer’s delay by balancing task-level accuracy gains against added network latency. We then adapt the selected configurations through spike-based quantization-aware training and bounded tuning of firing thresholds and initial membrane potentials. Together, these steps form Falcon, a framework for \underline{F}ine-grained \underline{A}nalysis of \underline{L}atency and \underline{CON}trolled firing which systematically analyzes and optimizes SNN latency under a spatial analog compute-in-memory mapping with shared digital engines.
We evaluate Falcon on Google Speech Commands V2 (GSC) and Spiking Speech Commands (SSC), achieving competitive accuracies of 96.31\% and 83.02\% at modeled network-core latencies of 119.64 and 124.00$\mu\mathrm{s}$, respectively. Together, our analysis and results show that SNNs can compute more yet finish faster, and wait longer yet predict worse, highlighting why Falcon matters for both latency and accuracy.

\end{abstract}

\section{Introduction}
\label{sec:intro}

Spiking neural networks (SNNs) have attracted growing interest in
low-power keyword spotting and speech command recognition~\citep{wang2024efficient,yang2022deep}.
Their binary spike communication and event-driven computation offer
opportunities to reduce computation and data-movement energy relative
to quantized neural networks (QNNs)~\cite{11631730}.
Recent models, including SpikeSCR, SpikCommander, and
SIDC-KWS~\citep{wang2024efficient,wang2026spikcommander,lim2025sidc},
report competitive recognition accuracy and promising energy efficiency.
However, existing studies focus primarily on accuracy and energy efficiency, while their discussion of latency is largely limited to time-window length, leaving it unclear whether SNNs offer faster inference than QNNs.

In terms of processing rounds, SNNs seem to be at a disadvantage. For example, representing eight levels requires three bit-serial rounds in QNN but seven timesteps in a rated-based SNN. Assuming equal per-round latency, the SNN therefore takes longer to complete each layer. However, this does not necessarily increase end-to-end latency, which also depends on when the next layer can start. In the bit-serial QNN considered here, each output becomes available only after all three bit contributions are accumulated and the result is quantized. In contrast, integrate-and-fire (IF) neurons in SNN can emit and propagate spikes after processing only part of the input, without waiting for all seven timesteps to finish. Showing in Figure~\ref{fig:pipeline_intro}(a), the next layer can therefore process earlier timesteps while the preceding layer processes later ones. This cross-timestep pipelining, enabled by IF neurons, reduces inter-layer waiting, potentially allowing SNNs to achieve faster inference than the bit-serial QNN baseline despite longer per-layer latency.

\begin{figure}[ht]
    \centering
    \includegraphics[width=0.9\linewidth]{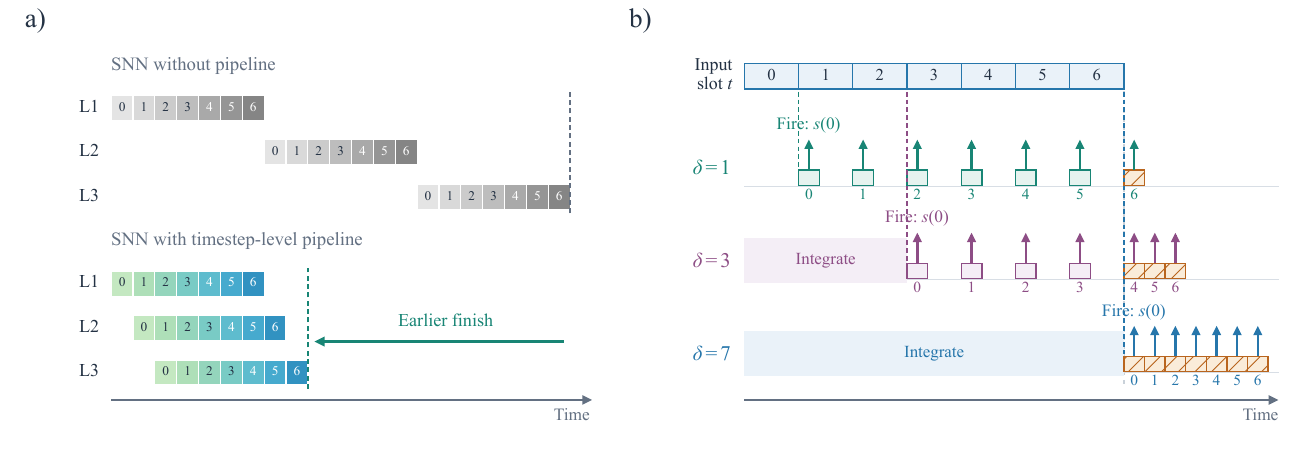}
\caption{Cross-timestep pipelining and firing delay in SNNs.
(a) Different layers process different timesteps in parallel,
allowing earlier completion.
(b) Delay-controlled IF kernel with $\delta$ sets how many input slots are accumulated before the first firing decision.
}
\label{fig:pipeline_intro}
\end{figure}

However, this overlap can reduce accuracy because neurons may fire before all inputs arrive. Later inputs may cancel earlier contributions, but spikes already sent cannot be taken back and may already have affected the next layer. Waiting for more inputs therefore seems to offer a simple trade-off: larger latency but higher accuracy. However, we find and prove that preserving spike counts does not guarantee the same prediction: spike timing can change whether downstream neurons cross the firing threshold. Waiting longer can therefore make inference both slower and less accurate. Effective pipelining requires choosing which layers should wait, and for how long.

To address this challenge, we introduce Falcon, a framework for
\underline{F}ine-grained \underline{A}nalysis of \underline{L}atency
and \underline{CON}trolled firing.
Falcon introduces a delay-controlled IF kernel (Figure~\ref{fig:pipeline_intro}(b)) that controls how much
input is accumulated before each firing decision, enabling cross-timestep pipelining across convolutional, linear, and residual stages.
Because suitable delays depend on the model and data, we propose
Pipeline Delay Search (PDS) to choose where and how long to wait.
With model parameters fixed, PDS evaluates layer-wise delay changes
using validation accuracy and modeled core latency.
Among the schedules found within the matched QNN latency budget,
it selects \textit{Fastest} for minimum latency, \textit{Accurate}
for maximum validation accuracy, and \textit{Balanced} by maximizing normalized accuracy minus normalized latency.
With the selected delays fixed, we adapt the network through
quantization-aware training with a spiking forward pass and bounded
tuning of firing thresholds and initial membrane potentials.
Finally, we evaluate core inference latency under a spatial analog-CIM mapping with shared digital engines. We consider two model sizes, Falcon-Medium and Falcon-Large, with 3 and 5 Transformer blocks, respectively.
Analog arrays execute static-weight convolutions and linear projections, while digital units handle input-dependent attention, neuron updates,
and residual additions. We compute latency from analog readout and digital
execution cycles, accounting for input dependencies, inter-layer
waiting, and computation overlap.

We evaluate Falcon on Google Speech Commands v2 (GSC) and Spiking
Speech Commands (SSC)~\citep{warden2018speech,cramer2022heidelberg},
comparing it with matched QNNs, Full-lookahead SNNs, and prior SNN methods.
In the Full-lookahead setting, FALCON-Large achieves $96.91\%$
test accuracy on GSC, while FALCON-Medium achieves
$83.94\%$ on SSC.
For low-latency inference, Falcon-Medium with \textit{Balanced} schedule
achieves $96.31\%$ accuracy on GSC at a modeled core latency of
$119.64\,\mu\mathrm{s}$ and $83.02\%$ accuracy on SSC with latency of $124\,\mu\mathrm{s}$. These results show that Falcon offers flexible accuracy--latency trade-offs, supporting both high-accuracy and low-latency SNN inference
through delay-controlled firing.

\section{Preliminaries}
\label{sec:preliminaries}

\paragraph{Analog-CIM Execution Model:}

Analog compute-in-memory (CIM) performs vector--matrix multiplication
within memory arrays.
Weights are stored as cell physical states, and input drive
the array rows.
The resulting currents add along each column to form weighted sums,
which analog-to-digital converters (ADCs) convert into digital codes~\cite{8910375,ye202328}.
We define one \emph{analog cycle} as one row-group evaluation and
ADC conversion of the selected column outputs
$
t_{\mathrm{cycle}}
=
t_{\mathrm{set}}+t_{\mathrm{ADC}},
$
, where $t_{\mathrm{set}}$ covers input setup and array settling,
and $t_{\mathrm{ADC}}$ is the time to complete one ADC conversion.
Processing one input vector may require multiple analog cycles.
Each crossbar tile contains $R_{\mathrm{tile}}$ rows, and
$N_{\mathrm{active}}$ rows can be activated at once.
Shared ADCs also read the columns in $F_{\mathrm{MUX}}$ sequential
groups. For an input dimension $K_l$, with larger inputs split
across parallel tiles, one complete analog pass requires
\[
N_{\mathrm{cycle},l}
=
\underbrace{
\left\lceil
\frac{\min(K_l,R_{\mathrm{tile}})}{N_{\mathrm{active}}}
\right\rceil
}_{\text{row-activation groups}}
\underbrace{F_{\mathrm{MUX}}}_{\text{column-readout groups}},
\qquad
\tau_l=N_{\mathrm{cycle},l}t_{\mathrm{cycle}}.
\]
Here, $\tau_l$ is the time required for one analog pass.
Each row-group and column-group combination incurs one analog cycle.

\paragraph{Local Cycles of QNNs and SNNs:}

We consider bit-serial QNN execution and SNN activations represented
by spike counts.
A $b_l$-bit activation has $2^{b_l}$ levels, while $T_l$ binary spike
slots represent counts from $0$ to $T_l$.
Matching the number of levels gives $T_l=2^{b_l}-1$.
Under the same weights, array mapping, and ADC precision,
each bit plane or spike slot requires one analog pass.
The local analog cycle counts are therefore
$
C_{\mathrm Q}^{(l)}=b_lN_{\mathrm{cycle},l}$ and 
$
C_{\mathrm S}^{(l)}=T_lN_{\mathrm{cycle},l},
$
with execution times $b_l\tau_l$ and $T_l\tau_l$, respectively.

\section{Methods}
\label{sec:methods}

\subsection{Delay-controlled IF kernel }
\label{sec:pipelined_execution}
Falcon enables cross-timestep pipelining through a delay-controlled integrate-and-fire (IF) kernel. A layer-specific delay determines how many input timesteps contribute
to each firing decision. Each layer forwards its output spikes as they are generated, allowing downstream layers to process earlier timesteps while upstream layers continue processing later ones.

We first introduce the Delay-controlled IF kernel $\mathrm{SN}_{\delta}$.
We assume that each IF layer processes T input slots and makes T firing decisions.
Given input spikes $s_i^{l-1}(t)\in\{0,1\}$, the normalized current
to output neuron $j$ in layer $l$ is
\begin{equation}
I_j^l(t)
=
\gamma_j^l\sum_i w_{ij}^l s_i^{l-1}(t),
\label{eq:Falcon_slot_current}
\end{equation}
where $w_{ij}^l$ is the synaptic weight and $\gamma_j^l$ is a fixed
output scaling factor.
A normalized static term $I_{j,0}^l$ collects input-independent
contributions and is distributed equally across the $T$ input slots.
The membrane potential is initialized to
$V_j^l=\frac12+\mu_j^l$, where $\mu_j^l$ is the initial offset.

For output slot $t$, the firing delay $\delta^l\in\{1,\ldots,T\}$
specifies the last required input slot:
\begin{equation}
t_{\mathrm e}
=
\min(t+\delta^l-1,T-1).
\label{eq:Falcon_delay}
\end{equation}
The index $t_{\mathrm s}$, initialized to zero, tracks the next
unprocessed input slot.
Before decision $t$, the neuron processes the required inputs in order.
While $t_{\mathrm s}\leq t_{\mathrm e}$, it waits for input slot
$t_{\mathrm s}$ and updates
\begin{equation}
V_j^l
\leftarrow
V_j^l+\frac{I_{j,0}^l}{T}+I_j^l(t_{\mathrm s}),
\qquad
t_{\mathrm s}\leftarrow t_{\mathrm s}+1.
\label{eq:Falcon_if_receive}
\end{equation}
Once these inputs have been accumulated, the neuron makes its firing
decision and applies soft reset:
\begin{equation}
s_j^l(t)=\mathbf{1}[V_j^l\geq\theta_j^l],
\qquad
V_j^l\leftarrow V_j^l-\theta_j^l s_j^l(t),
\label{eq:Falcon_if_fire}
\end{equation}
where $\theta_j^l>0$ is the firing threshold.
The neuron has no leakage and emits at most one spike per output slot. With $\delta^l=1$, each input slot is followed by a firing decision;
with $\delta^l=T$, all inputs are accumulated before the first decision.
After all inputs have been processed, the remaining firing decisions
use the stored membrane potential without adding further current.
Output spikes are passed to downstream operators as they are generated.
Algorithm~\ref{alg:rate_if} summarizes $\mathrm{SN}_{\delta^l}$.

\begin{algorithm}[ht]
\caption{Delay-controlled Rate-IF kernel $\mathrm{SN}_{\delta^l}$}
\label{alg:rate_if}
\begin{algorithmic}[1]
\REQUIRE
Input spike slots
$\{\mathbf{s}^{l-1}(t)\}_{t=0}^{T-1}$;
current map in Eq.~\ref{eq:Falcon_slot_current};
static term $\mathbf I_0^l$;
delay $\delta^l\in\{1,\ldots,T\}$;
thresholds $\boldsymbol\theta^l>0$;
initial offsets $\boldsymbol\mu^l$.
\ENSURE
Registered output slots
$\{\mathbf{s}^l(t)\}_{t=0}^{T-1}$.
\STATE
$\mathbf V^l\gets\frac12\mathbf1+\boldsymbol\mu^l$;
$t_{\mathrm s}\gets0$.
\FOR{$t=0$ to $T-1$}
    \STATE
    $t_{\mathrm e}\gets\min(t+\delta^l-1,T-1)$.
    \WHILE{$t_{\mathrm s}\leq t_{\mathrm e}$}
        \STATE Wait until input slot $t_{\mathrm s}$ is available.
        \STATE Compute $\mathbf I^l(t_{\mathrm s})$ using
        Eq.~\ref{eq:Falcon_slot_current}.
        \STATE
        $\mathbf V^l\gets
        \mathbf V^l+\mathbf I_0^l/T+\mathbf I^l(t_{\mathrm s})$.
        \STATE $t_{\mathrm s}\gets t_{\mathrm s}+1$.
    \ENDWHILE
    \STATE
    $\mathbf s^l(t)\gets
    \mathbf1[\mathbf V^l\geq\boldsymbol\theta^l]$.
    \STATE
    $\mathbf V^l\gets
    \mathbf V^l-\boldsymbol\theta^l\odot\mathbf s^l(t)$.
    \STATE Register $\mathbf s^l(t)$ as output slot $t$.
\ENDFOR
\STATE \textbf{return}
$\{\mathbf s^l(t)\}_{t=0}^{T-1}$.
\end{algorithmic}
\end{algorithm}

For network structure design,
Falcon consists of an input spike encoder, a pipelined stem,
and repeated Transformer blocks.
Similar to SpikCommander and SpikeSCR~\citep{wang2026spikcommander,wang2024efficient},
we use a convolutional layer to generate input spikes.
The encoder applies Conv--BN--spike encoding, to produce the input stream $\mathbf S_0$.
The stem contains two convolutional and two linear IF stages:
\begin{equation}
\begin{aligned}
\mathbf S_i
&=\mathrm{SN}_{\delta_{si}}\!\left(
\mathrm{BN}_i(\mathrm{Conv}_i(\mathbf S_{i-1}))\right),
\quad i=1,2,\\
\mathbf X
&=\mathrm{SN}_{\delta_{s4}}\!\left(
\mathrm{SN}_{\delta_{s3}}\!\left(
\operatorname{Reshape}(\mathbf S_2)\mathbf W_P
\right)\mathbf W_E\right).
\end{aligned}
\label{eq:Falcon_stem}
\end{equation}
Here, $\mathrm{SN}_{\delta}$ denotes the IF kernel with firing delay
$\delta$. Operations on spike streams include the fixed scaling and
static contributions defined above.

Each Transformer block computes Q/K/V projections in parallel.
Q and K are quantized after accumulating the complete input window,
while Value uses full-lookahead IF, $\mathrm{SN}_T$.
We replace softmax with
ConSmax~\citep{liu2024consmaxhardwarefriendlyalternativesoftmax},
a hardware-friendly alternative that avoids maximum and sum reductions,
allowing element-wise pipelined evaluation.
For each head $h$, attention computes
\begin{equation}
\mathbf A_h
=\mathcal Q_A\!\left(
\operatorname{ConSmax}\!\left(
\frac{\mathbf Q_h\mathbf K_h^\top}{\sqrt{d_h}}
\right)\right),
\qquad
\mathbf C
=\mathrm{SN}_T\!\left(
\operatorname{Concat}_h
(\mathbf A_h\widehat{\mathbf V}_h)\right).
\label{eq:Falcon_attention}
\end{equation}
where $d_h$ is the head dimension and $\widehat{\mathbf V}_h$
is decoded from the complete Value spike count.
The attention quantizer $\mathcal Q_A$ uses one scale shared across
all heads.
Each AV product starts when both operands are ready, and the resulting
context passes through full-lookahead IF.
The attention output projection and the two FFN layers each use
the delay-controlled IF kernel.
Residual connections after attention and the FFN combine scale-adjusted
inputs from matching logical slots, followed by BN and IF.
After the last block, spike counts are decoded, normalized, pooled,
and passed to the classifier.

Static-weight convolutions and linear projections in the stem and
Transformer blocks are mapped to analog CIM.
After ADC conversion, digital units accumulate partial sums, apply
fixed scaling, and perform IF updates, residual additions,
and buffering.
The input-dependent products $\mathbf Q\mathbf K^\top$ and
$\mathbf A\mathbf V$ are executed digitally along with ConSmax.

\subsection{Theory for Delay in SNNs}
\label{sec:delay_theory}

We study how firing delay affects local spike counts and network
accuracy.
For a fixed input-current stream, a larger delay cannot increase
the count error relative to Full-lookahead.
However, changes in spike timing can still reduce task accuracy,
even when the final counts of the changed layer remain correct.

For the delay-controlled IF kernel in Algorithm~\ref{alg:rate_if}, let
$s_j^l(t;\delta)$ denote the spike produced at output slot $t$
when layer $l$ uses delay $\delta$.
Define the final count and its error relative to Full-lookahead as
\begin{equation}
\widehat q_j^l(\delta)
=
\sum_{t=0}^{T-1}s_j^l(t;\delta),
\qquad
e_j^l(\delta)
=
\left|
\widehat q_j^l(\delta)-\widehat q_j^l(T)
\right|.
\label{eq:local_count_error}
\end{equation}
The reference $\widehat q_j^l(T)$ uses the same input currents,
static term, threshold, and initial membrane potential.

\begin{theorem}
\label{thm:local_count}
Fix the input-current stream and neuron parameters.
For any $1\leq\delta<\delta'\leq T$, the delay-controlled IF kernel
satisfies
\begin{equation}
0
\leq
e_j^l(\delta)-e_j^l(\delta')
\leq
\delta'-\delta,
\qquad
e_j^l(\delta)\leq T-\delta.
\label{eq:count_error_monotonicity}
\end{equation}
\end{theorem}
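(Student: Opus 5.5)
The plan is to reduce the kernel to a scalar recurrence on cumulative spike counts, get a closed form for the final count in terms of one integer (the number of spikes emitted before all input has arrived), and then compare adjacent delays $\delta$ and $\delta+1$ through a coupling. The general case follows by telescoping.

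\textbf{Step 1: the greedy recurrence.} Write $c_u=I_{j,0}^l/T+I_j^l(u)$ and let $P_k=V_j^l(\text{init})+\sum_{u<k}c_u$ be the cumulative drive after $k$ input slots. Let $n_t$ be the number of spikes emitted at decisions $0,\dots,t$. Since there is no leak and the reset is soft, the potential just before decision $t$ is $P_{t_{\mathrm e}(t)+1}-\theta n_{t-1}$. The neuron therefore fires iff $n_{t-1}+1\le P_{t_{\mathrm e}(t)+1}/\theta$, which gives
\[
n_t=f_t(n_{t-1}),\qquad f_t(x)=\min\bigl(x+1,\max(x,F_t)\bigr),\qquad F_t=\lfloor P_{t_{\mathrm e}(t)+1}/\theta\rfloor .
\]
The map $f_t$ is nondecreasing and satisfies $x\le y\le x+1\Rightarrow f_t(x)\le f_t(y)\le f_t(x)+1$.

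\textbf{Step 2: closed form.} With delay $\delta$, every decision $t\ge T-\delta$ has already seen all input, so these $\delta$ decisions all use $F=\lfloor P_T/\theta\rfloor$. Let $m_\delta=n_{T-\delta-1}$ be the number of premature spikes, so $0\le m_\delta\le T-\delta$. Iterating $f$ $\delta$ times then gives
\[
\widehat q(\delta)=\operatorname{clamp}(F,\,m_\delta,\,m_\delta+\delta).
\]
For $\delta=T$ we have $m_T=0$, so $\widehat q(T)=\operatorname{clamp}(F,0,T)$. Because $J_\delta=[m_\delta,m_\delta+\delta]\subseteq[0,T]$, this simplifies to $\widehat q(\delta)=\operatorname{clamp}(\widehat q(T),J_\delta)$. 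Hence
\[
e(\delta)=\operatorname{dist}(\widehat q(T),J_\delta).
\]
The bound $e(\delta)\le T-\delta$ is immediate: an interval of length $\delta$ inside $[0,T]$ lies within distance $T-\delta$ of every point of $[0,T]$.

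\textbf{Step 3: coupling adjacent delays.} This is the crux. Decision $t$ under delay $\delta+1$ sees exactly the input seen by decision $t+1$ under delay $\delta$. Set $a_t=n^{(\delta+1)}_{t-1}$ and $b_t=n^{(\delta)}_t$. Both sequences obey the same recurrence $x\mapsto f(x)$ with identical $F$'s, and they start from $a_0=0$ and $b_0\in\{0,1\}$. Induction using the monotone, one-step-Lipschitz property of $f$ gives $a_t\le b_t\le a_t+1$ for all $t$. Evaluating at $t=T-\delta-1$ yields
\[
m_{\delta+1}\le m_\delta\le m_{\delta+1}+1 .
\]

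\textbf{Step 4: monotonicity.} From Step 3, $J_\delta\subseteq J_{\delta+1}$, so $e(\delta+1)\le e(\delta)$. Also $J_{\delta+1}$ is contained in $J_\delta$ enlarged by one on each side, so distances to them differ by at most $1$. Hence $0\le e(\delta)-e(\delta+1)\le1$. Summing these inequalities from $\delta$ to $\delta'-1$ gives Eq.~\ref{eq:count_error_monotonicity}.

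The main obstacle I expect is Step 3, specifically making the index shift between the two runs exact. Near the boundary, $t_{\mathrm e}$ is capped at $T-1$, and decisions made after all input has arrived still use the stored potential. Both effects have to be absorbed correctly into the shared $F$'s. A secondary point is that $F$ may be negative or exceed $T$; the clamps in Step 2 are meant to handle both cases.
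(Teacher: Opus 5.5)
Your proof is correct. Its core is the same coupling the paper uses: decision $t$ under delay $\delta+1$ sees the same input prefix as decision $t+1$ under delay $\delta$. So the $\delta$-run starts with a lead of $0$ or $1$ spikes, and the monotone, one-step-Lipschitz update keeps the lead in that range.

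The difference is in how you turn this into the error bounds.
\begin{itemize}
\item The paper runs the coupling through all $T-1$ shared decisions. This gives $\widehat q(\delta)\in\{n',n'+1\}$, where $n'$ is the $(\delta+1)$-count before its extra final decision. That final full-input decision fires iff $n'<Q$, and a two-case check places $\widehat q(\delta+1)$ between $\widehat q(\delta)$ and $Q$. The bound $e(\delta)\le T-\delta$ then follows by telescoping to $e(T)=0$.
\item You first prove the closed form $\widehat q(\delta)=\operatorname{clamp}(Q,J_\delta)$ with $J_\delta=[m_\delta,m_\delta+\delta]\subseteq[0,T]$, so $e(\delta)=\operatorname{dist}(Q,J_\delta)$. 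You then only need to couple the premature decisions, which gives $J_\delta\subseteq J_{\delta+1}\subseteq[m_\delta-1,m_\delta+\delta+1]$.
\end{itemize}

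The paper's route is shorter and needs no recurrence machinery. Yours costs an extra lemma but gives a more explicit structural picture:
\begin{itemize}
\item the error is determined entirely by the premature spike count $m_\delta$;
\item it vanishes exactly when $Q\in J_\delta$;
\item $e(\delta)\le T-\delta$ follows directly from $J_\delta$ being a length-$\delta$ subinterval of $[0,T]$, with no telescoping.
\end{itemize}

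The boundary issue you flag in Step 3 does not arise. The identity $t_{\mathrm e}^{(\delta+1)}(t)=t_{\mathrm e}^{(\delta)}(t+1)$ holds even with the cap at $T-1$. Moreover, every decision you actually couple (up to $t=T-\delta-1$) is premature in its own run, so none of them is capped.
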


However, a delay change can preserve the final count while changing the output slots in which spikes occur.
Downstream neurons receive these spikes in a different order,
which can change their firing decisions.
All other delays, weights, and neuron parameters remain fixed.
Let $\operatorname{Acc}_{\mathcal D}(\delta)$ denote its accuracy on a fixed labeled dataset $\mathcal D$.

\begin{proposition}
\label{prop:accuracy_nonmonotone}
There exist a fixed SNN, a fixed finite dataset $\mathcal D$,
and delays $1\leq\delta_1<\delta_2<\delta_3\leq T$ such that
\[
e_j^l(\delta;x)=0
\]
for every neuron $j$ in the changed layer, every
$(x,y)\in\mathcal D$, and every $\delta\in\{1,\ldots,T\}$,
but
\begin{equation}
\operatorname{Acc}_{\mathcal D}(\delta_1)
>
\operatorname{Acc}_{\mathcal D}(\delta_2)
<
\operatorname{Acc}_{\mathcal D}(\delta_3).
\label{eq:accuracy_nonmonotonicity}
\end{equation}
\end{proposition}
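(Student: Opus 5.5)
We prove the proposition with an explicit construction: a tiny network and a one-example dataset, chosen so that the changed layer always emits exactly one spike, while the slot of that spike depends on $\delta$. Take $T=3$, let the changed layer $l$ contain a single neuron, and use thresholds $\theta=1$, offsets $\mu=0$ (so $V=\tfrac12$ initially) and zero static terms throughout. We feed this neuron the fixed current stream $I(0)=0$, $I(1)=0$, $I(2)=1$. Tracing Algorithm~\ref{alg:rate_if}:
\begin{itemize}
\item for $\delta=1$, the potential stays at $\tfrac12$ until slot $2$, reaches $\tfrac32$, and the neuron fires at $t=2$;
\item for $\delta=2$, decision $0$ sees $V=\tfrac12$, and decision $1$ has absorbed slot $2$, so it fires at $t=1$;
\item for $\delta=3$, all input is absorbed before decision $0$, so it fires at $t=0$ and then stays at $\tfrac12$.
\end{itemize}
In every case $\widehat q(\delta)=1=\widehat q(3)$, so $e(\delta;x)=0$ for all $\delta\in\{1,2,3\}$. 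The spike slot is $\tau(\delta)=3-\delta$.

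Downstream, we place two IF neurons $A$ and $B$ with delay $1$, threshold $1$ and $V=\tfrac12$ initially. Each receives the changed neuron's spike with weight $1$, plus a fixed bias current stream from an auxiliary input (which is independent of $\delta$):
\begin{itemize}
\item $A$ receives $c_A=(0,-1,0)$;
\item $B$ receives $c_B=(0,0,-1)$.
\end{itemize}
The key routine step is to trace these neurons for each spike slot $\tau\in\{0,1,2\}$. For $A$: when $\tau=0$ it reaches $\tfrac32$ and fires, giving count $1$; when $\tau=1$ the spike is cancelled by $c_A(1)=-1$, giving count $0$; when $\tau=2$ the potential is $\tfrac12,-\tfrac12,\tfrac12$, giving count $0$. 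For $B$: it fires at slot $\tau$ for $\tau\in\{0,1\}$, giving count $1$; when $\tau=2$ the spike coincides with $c_B(2)=-1$, giving count $0$. Hence
\[
(q_A,q_B)=(1,1),\ (0,1),\ (0,0)\qquad\text{for }\tau=0,1,2 .
\]
These soft-reset traces are where the timing dependence enters, and they are the main step to check carefully.

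For the readout, take a linear classifier on the decoded counts with scores $z_0=q_B-q_A-\tfrac12$ and $z_1=0$, predicting $\arg\max$. This gives class $1$ for $\tau=0$ ($z_0=-\tfrac12$), class $0$ for $\tau=1$ ($z_0=\tfrac12$), and class $1$ for $\tau=2$ ($z_0=-\tfrac12$).

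Let $\mathcal D=\{(x,1)\}$, where $x$ is the input producing the currents above. The auxiliary input neuron and the encoder are fixed and use delays that are not varied. Then, with $\delta_1=1$, $\delta_2=2$ and $\delta_3=3$,
\[
\operatorname{Acc}_{\mathcal D}(1)=1>\operatorname{Acc}_{\mathcal D}(2)=0<\operatorname{Acc}_{\mathcal D}(3)=1,
\]
while the count error is zero for every delay, as required. In this toy construction the hardest part is not the arithmetic but ensuring the counterexample fits the paper's architecture. In particular, the ``changed layer'' must be the only layer whose delay varies, and the classifier must be a fixed function of decoded counts. If a single linear layer is required, the pair $A,B$ can be regarded as one hidden IF layer followed by the linear classifier described.
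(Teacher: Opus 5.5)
Your construction is correct. The traces check out: the changed neuron spikes once, at slot $3-\delta$, and $(q_A,q_B)=(0,0),(0,1),(1,1)$ for $\delta=1,2,3$. All membrane potentials are half-integers, so threshold ties never arise. A one-sample dataset with accuracies $1,0,1$ then meets the existential statement.

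The mechanism is the paper's: a count-preserving shift of spike slots, decoded downstream by signed currents that either cancel within one slot or arrive after an irrevocable spike. What differs is where the timing reference comes from.

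The paper keeps the network strictly layered, so layer $l+1$ reads only from layer $l$. One of three neurons in layer $l$ receives its unit current at slot $0$, so it spikes at slot $0$ for every delay and serves as the reference. The other two receive currents at slots $\delta_2-1$ and $\delta_3-1$ and move to slot $0$ exactly when $\delta\ge\delta_2$ and $\delta\ge\delta_3$. This handles an arbitrary prescribed triple $\delta_1<\delta_2<\delta_3\le T$, for any $T$, with front-loaded rate-coded inputs.

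You instead use a single changed neuron whose spike sweeps all three slots, and you take the references from $\delta$-independent spike trains fed directly into layer $l+1$. That bypass is forced by your choice $\delta_3=T$. Under $\delta=T$ a layer-$l$ neuron emits its spikes in slots $0,\ldots,Q-1$, so no neuron of the changed layer can supply a lone spike at slot $1$ or $2$. The static term $I_0/T$ is uniform in $t$, so it cannot supply a time-varying bias either. The bypass is legitimate, since the statement asks only for some fixed SNN and the architecture already merges residual inputs by logical slot. You should still state explicitly that $c_A$ and $c_B$ are input spike trains $\mathbf 1[t=1]$ and $\mathbf 1[t=2]$ with weight $-1$, routed around layer $l$.

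Your version is smaller but fixes $T=3$. It extends to an arbitrary triple if you put the changed neuron's input at slot $T-1$ and the references at slots $T-\delta_2$ and $T-\delta_2+1$; then only $\delta=\delta_2$ misclassifies. As written, it proves the existential statement but not the paper's stronger ``for every triple'' form.
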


Proof sketch:
For Theorem~\ref{thm:local_count}, increasing $\delta$ by one replaces
the earliest partial-input decision with a full-input decision at
the end. The remaining decisions use the same input prefixes.
This leaves the final count unchanged or moves it one spike toward
the full-lookahead count. Repeating this step gives both bounds.
For Proposition~\ref{prop:accuracy_nonmonotone}, we construct spike
streams whose counts remain unchanged but whose spike positions
shift with delay. Signed inputs to downstream neurons then cancel
at different delays, changing their output counts. A fixed readout
produces a correct, incorrect, and then correct prediction as delay
increases.
Full proofs are provided in
Appendices~\ref{app:proof_local_count}
and~\ref{app:proof_accuracy_nonmonotone}.

\subsection{Pipeline Delay Search}
\label{sec:pds}

A larger delay can reduce local count error but still lower network accuracy.
PDS therefore uses two greedy search paths to select layer-wise delay increases based on validation accuracy and modeled core latency within the QNN latency budget.
Let $\mathcal B$ contain the IF stages after stem operations,
attention output projections, FFN layers, and residual additions.
PDS adjusts their delays $\boldsymbol\delta=(\delta^l)_{l\in\mathcal B}$
while keeping model parameters and source encoding fixed.
Q/K use the complete input window, and Value and Context keep
Full-lookahead.
The search targets higher accuracy within the matched QNN latency
budget $L_{\mathrm Q}$.

For a change from $\boldsymbol\delta$ to $\boldsymbol\delta'$, let
$R$ count validation predictions changed from wrong to correct,
and $H$ count those changed from correct to wrong.
We compute the accuracy gain and rescue score as
\begin{equation}
\Delta A=\frac{R-H}{|\mathcal D_{\mathrm{val}}|},
\qquad
S_{\mathrm{rescue}}=\frac{R-H}{\sqrt{R+H}},
\label{eq:pds_score}
\end{equation}
with $S_{\mathrm{rescue}}=0$ when $R+H=0$.

A change is accepted here if:
\begin{equation}
S_{\mathrm{rescue}}\geq\tau,
\qquad
\Delta A\geq\epsilon_{\mathrm{acc}},
\qquad
L_{\mathrm{DAG}}(\boldsymbol\delta')<L_{\mathrm Q}.
\label{eq:pds_filter}
\end{equation}
We use $\tau=2$ and $\epsilon_{\mathrm{acc}}=0.001$,
corresponding to a minimum gain of $0.1$ percentage points.

The change with the largest accuracy gain may also increase latency.
PDS uses two search paths: one selects the largest accuracy gain,
and the other selects the largest gain per added latency.
Let $L(\boldsymbol\delta)$ be the network latency under schedule
$\boldsymbol\delta$, and let
$\Delta L=L(\boldsymbol\delta')-L(\boldsymbol\delta)$.
The two scores are
\begin{equation}
U_{\mathrm{acc}}=\Delta A,
\qquad
U_{\mathrm{eff}}=
\begin{cases}
\Delta A/\Delta L, & \Delta L>0,\\
+\infty, & \Delta L\leq0.
\end{cases}
\label{eq:pds_utility}
\end{equation}

PDS starts with delay one at every searchable IF stage.
If candidate scores are equal, we choose the larger accuracy gain,
then the smaller added latency.
In our setting, algorithm~\ref{alg:pds} searches delays up to $\delta_{\max}=3$ and updates candidate scores after each selected change.

\begin{algorithm}[ht]
\caption{Pipeline Delay Search (PDS)}
\label{alg:pds}
\begin{algorithmic}[1]
\REQUIRE
Fixed model;
$\mathcal D_{\mathrm{val}},\mathcal B,\delta_{\max}$;
latency function $L$, budget $L_{\mathrm Q}$;
$\tau,\epsilon_{\mathrm{acc}}$.
\ENSURE
Fastest, Balanced, and Accurate schedules.

\STATE Initialize $\boldsymbol\delta^{(0)}=(1,\ldots,1)$,
with one delay for each stage in $\mathcal B$.
\STATE $\mathcal S\gets\{\boldsymbol\delta^{(0)}\}$.
\FOR{$m\in\{\mathrm{acc},\mathrm{eff}\}$}
    \STATE $\boldsymbol\delta\gets\boldsymbol\delta^{(0)}$.
    \WHILE{\textbf{true}}
        \STATE Form candidates $\mathcal E$ by increasing one
        $\delta^l$ to each
        $d\in\{\delta^l+1,\ldots,\delta_{\max}\}$,
        for every $l\in\mathcal B$.
        \STATE Compare candidates with $\boldsymbol\delta$
        and keep those passing Eq.~\ref{eq:pds_filter}.
        \IF{$\mathcal E=\varnothing$}
            \STATE \textbf{break}
        \ENDIF
        \STATE Choose $\boldsymbol\delta'\in\mathcal E$
        with the largest $U_m$.
        \STATE $\boldsymbol\delta\gets\boldsymbol\delta'$.
        \STATE $\mathcal S\gets\mathcal S\cup\{\boldsymbol\delta\}$.
    \ENDWHILE
\ENDFOR
\STATE Remove a schedule from $\mathcal S$ if another is at least
as accurate and at least as fast, with a strict improvement in one.
\STATE \textbf{return} Fastest, Balanced, and Accurate from $\mathcal S$.
\end{algorithmic}
\end{algorithm}

From the remaining schedules in $\mathcal S$, \emph{Fastest} has
the lowest latency and \emph{Accurate} has the highest accuracy.
To choose \emph{Balanced}, we scale accuracy and latency separately
to $[0,1]$ using their minimum and maximum values in $\mathcal S$.
We then subtract scaled latency from scaled accuracy and select
the schedule with the highest score.

\section{Results}
\label{sec:results}

We evaluate Falcon on the 35-class Google Speech Commands v2
(GSC) and Spiking Speech Commands (SSC) benchmarks
\citep{warden2018speech,cramer2022heidelberg}. We use two model sizes, Falcon-Medium and Falcon-Large, with three
and five Transformer blocks, respectively. Additional experimental details, including data preprocessing and
model architectures, are provided in the Appendix.

Analog costs in Falcon are obtained from NeuroX, calibrated against
measurements from a 1-Mb ReRAM CIM macro~\cite{zhu_neurox, 8910375}.
Digital latency is computed using cycle-based models at 100~MHz.
Digital energy is evaluated separately using Design Compiler
synthesis in a commercial 22-nm process at the TT corner,
0.65~V, and $25\,^{\circ}\mathrm{C}$. Memory energy uses low-power memory-compiler configurations at the same corner.
We combine operator times through the dependency graph rather
than infer latency from operation counts alone.
All reported latencies are modeled network-core latencies, from
available source activations to the final Transformer block. 
Training runs on Ubuntu 24.04.4 LTS with two AMD EPYC 9355 CPUs,
1~TiB of RAM, and four NVIDIA RTX PRO 6000 Blackwell Server Edition
GPUs with 96~GB each. We first train a QNN and transfer its weights to an SNN for inference, following prior QNN-to-SNN conversion methods~\citep{3780338.3782679,yan2026otters}.
The converted SNN serves as the starting point for further training.

\subsection{More lookahead does not guarantee higher accuracy}
\label{sec:results_lookahead}

Figure~\ref{fig:delay_effects}(a--d) summarizes our delay
ablations on Falcon-Large using GSC, with weights and neuron
parameters fixed.
In Figure~\ref{fig:delay_effects}(a), increasing the uniform
delay from $\delta=1$ to $2$ raises modeled core latency
from $180.42$ to $231.24\,\mu\mathrm{s}$, and also lowers validation
accuracy from $11.64\%$ to $4.28\%$.
More waiting can make inference both slower and less accurate.
Figure~\ref{fig:delay_effects}(b) further examines the effect
of increasing delay at each boundary.
As Figure~\ref{fig:delay_effects}(a) shows, the matched-QNN
latency lies between the SNN latencies with uniform delays
of one and two, leaving limited room for added waiting.
We therefore test single-boundary changes from $\delta=1$
to $2$ or $3$, while keeping all other searchable delays
at one. Figure~\ref{fig:delay_effects}(c) presents three representative
cases.
Using the screening criteria defined in the Section~\ref{sec:pds},
we identify delay increases with sufficient task-level gains
and highlight them in the green region of
Figure~\ref{fig:delay_effects}(d).
These criteria guide the Pipeline Delay Search described next.

\begin{figure}[hbt]
    \centering
    \includegraphics[width=1.05\linewidth]{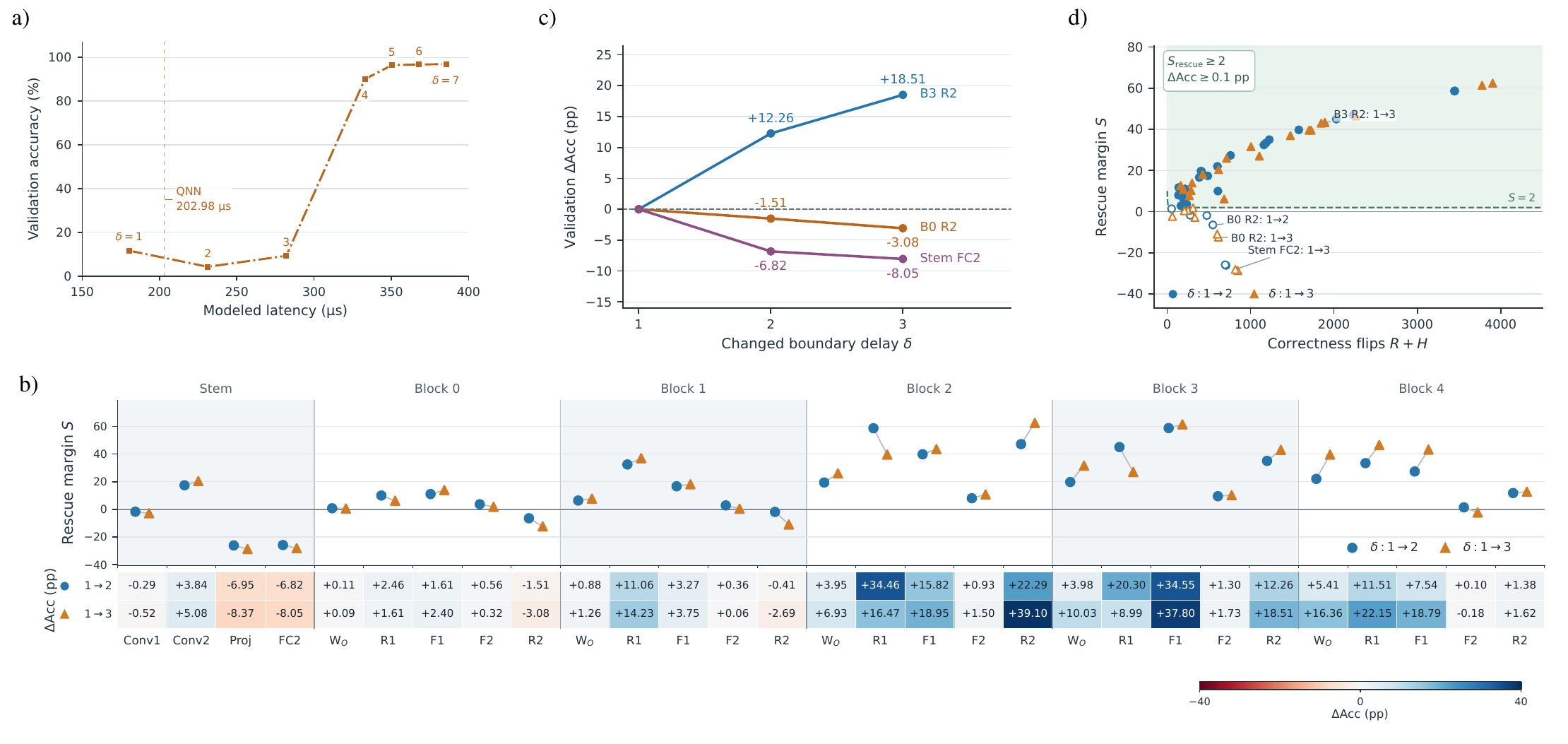}
\caption{Effects of firing delay on Falcon-Large accuracy and latency.(a) Accuracy and latency under uniform delays.
(b) Layerwise accuracy changes for delay increases
$1\!\to\!2$ and $1\!\to\!3$.
(c) Three representative layer responses.
(d) PDS candidate screening, with delay increases passing
both score and accuracy-gain gates highlighted in green.}
    \label{fig:delay_effects}
\end{figure}

\subsection{Pipeline Delay Search and adaptation}
\label{sec:results_pds}

Figure~\ref{fig:pds_adaptation}(a) shows the schedules selected
by PDS before adaptation.
On GSC with Falcon-Large, the two search paths evaluate 916 unique schedules. Fastest, Balanced, and Accurate reach validation accuracies of
$11.64\%$, $73.28\%$, and $91.99\%$ at
$180.42$, $184.24$, and $201.16\,\mu\mathrm{s}$, respectively. The configurations for all three schedules are provided in the Appendix~\ref{app:model_architecture}.
To test whether delay placement matters, we compare each PDS
schedule with ten random delay settings that use the same numbers
of each delay value and have the same core latency.
Figure~\ref{fig:pds_adaptation}(b) reports these frozen-model
results. For example, at the Balanced latency, PDS reaches $73.28\%$ validation accuracy, compared with $23.64\pm12.23\%$ for random settings. Furthermore, even after training, models using PDS-selected schedules
maintain competitive accuracy than those using randomly selected
schedules. Detailed experiments are presented in
Appendix~\ref{app:pds_post_training}.

Figure~\ref{fig:pds_adaptation}(c) shows the training process
after delay search. The SNN is first trained for 30 epochs with
spike-based QAT. The network weights are then fixed, and only
the firing thresholds and initial membrane potentials are tuned
for 15 epochs. Finally, the network weights and neuron parameters
are jointly trained for another 10 epochs. This process increases validation accuracy from $11.64\%$ to $95.88\%$ for Fastest, from $73.28\%$ to $96.17\%$ for Balanced, and from $91.99\%$ to $96.49\%$ for Accurate.

\begin{figure}[ht]
    \centering
    \includegraphics[width=1.05\linewidth]{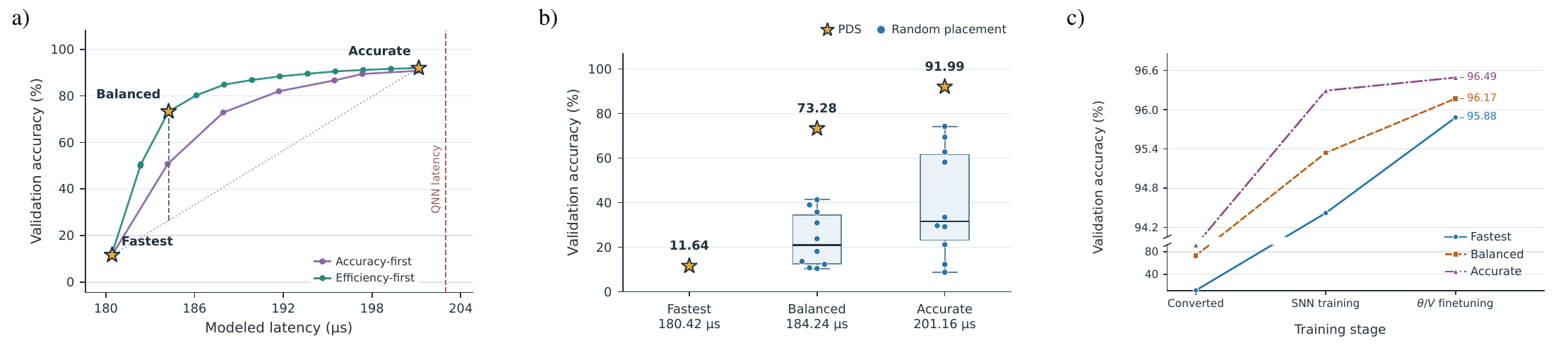}
    \caption{
   Delay search and training of Falcon-Large on GSC.
(a) Validation accuracy and latency of Fastest, Balanced,
and Accurate before SNN training.
(b) Comparison with ten random delay schedules at the same
core latency before SNN training.
(c) Validation accuracy at each SNN training stage.
    }
    \label{fig:pds_adaptation}
\end{figure}

\subsection{Final accuracy and latency}
\label{sec:results_final}

Table~\ref{tab:pds_main_results} reports the final test accuracy
and modeled core latency of FALCON-Medium on SSC and GSC,
and Falcon-Large on GSC.
The \textit{Fastest} setting uses $\delta=1$ at all searchable
IF stages.
On GSC, FALCON-Medium in this setting achieves $96.11\%$
accuracy at $115.82\,\mu\mathrm{s}$, compared with $96.68\%$
at $130.86\,\mu\mathrm{s}$ for its matched QNN.
This provides a $1.13\times$ speedup with a
$0.57$-percentage-point accuracy loss.
The \textit{Accurate} setting reduces this gap to $0.29$
percentage points at $129.04\,\mu\mathrm{s}$.
On SSC, FALCON-Medium with the \textit{Balanced} schedule
achieves $83.02\%$ accuracy at $124.00\,\mu\mathrm{s}$.
Compared with \textit{Fastest}, it gains $1.78$ percentage
points with only $7.58\,\mu\mathrm{s}$ of additional latency,
offering a favorable accuracy--latency trade-off while remaining
faster than the matched QNN ($83.97\%$ at
$131.46\,\mu\mathrm{s}$).
Full-lookahead configurations achieve higher accuracy
at the cost of longer latency.
\begin{table}[hbt]
\centering
\begin{threeparttable}
\caption{Test accuracy and modeled network-core latency.}
\label{tab:pds_main_results}
\small
\setlength{\tabcolsep}{3pt}
\renewcommand{\arraystretch}{0.8}
\begin{tabular*}{\textwidth}{@{\extracolsep{\fill}}lrrrrrr@{}}
\toprule
& \multicolumn{2}{c}{\textbf{SSC Falcon-Medium}}
& \multicolumn{2}{c}{\textbf{GSC Falcon-Medium}}
& \multicolumn{2}{c}{\textbf{GSC Falcon-Large}} \\
\cmidrule(lr){2-3}\cmidrule(lr){4-5}\cmidrule(lr){6-7}
Configuration
& \shortstack{Acc.(\%)} & \shortstack{latency($\mu$s)}
& \shortstack{Acc.(\%)} & \shortstack{latency($\mu$s)}
& \shortstack{Acc.(\%)} & \shortstack{latency($\mu$s)} \\
\midrule
Matched QNN
& 83.97 & 131.46
& 96.68 & 130.86
& 96.92 & 202.98 \\
Full-lookahead
& 83.94 & 253.04
& 96.68 & 252.44
& 96.91 & 385.44 \\
\midrule
PDS--Fastest
& 81.24 & \textbf{116.42}
& 96.11 & \textbf{115.82}
& 95.46 & \textbf{180.42} \\
PDS--Balanced
& 83.02 & 124.00
& 96.31 & 119.64
& 96.19 & 184.24 \\
PDS--Accurate
& 83.02 & 129.70
& 96.39 & 129.04
& 96.52 & 201.16 \\
\bottomrule
\end{tabular*}
\end{threeparttable}
\end{table}

Tables~\ref{tab:gsc_latency_comparison}
and~\ref{tab:gsc_ssc_comparison} place these results alongside
reported SNN speech-recognition results.
On GSC, FALCON-Medium with the \textit{Balanced} schedule achieves
$96.31\%$ accuracy with $0.82$M parameters, exceeding the
$96.08\%$ reported by SpikeSCR while using $73.9\%$
fewer parameters.
Relative to their non-pipelined counterparts, the Medium and
Large \textit{Balanced} configurations reduce modeled core latency
by $52.6\%$ and $52.2\%$, respectively, with accuracy losses
of $0.37$ and $0.72$ percentage points.
These results show that the \textit{Balanced} configurations
achieve competitive recognition accuracy while more than halving
modeled core latency.

\begin{table}[hbt]
\centering
\caption{
Comparison on GSC in terms of model size, accuracy, and latency.
As most prior works do not report hardware latency, we reproduce the latency of several representative state-of-the-art SNN models with digital arrays of the same size of Falcon. Detailed implementations and latency reconstruction are provided in Appendix~\ref{app:latency_reconstruction}.
For Falcon, results are reported as \textit{Non-pipelined / Balanced}.
}
\label{tab:gsc_latency_comparison}
\small
\setlength{\tabcolsep}{5pt}
\renewcommand{\arraystretch}{0.8}
\begin{tabular}{lccc}
\toprule
Method & Params. (M) & Accuracy (\%) & Latency ($\mu$s) \\
\midrule

T-BSO~\cite{pmlr-v267-liang25r}, Spiking VGG-11
& 9.23
& 96.12
& 1714.88 \\

\midrule

SpikeSCR~\cite{wang2024efficient}, 1L-16-256 (short)
& 1.63
& 94.71
& 278.47 \\

SpikeSCR, 1L-16-256
& 1.63
& 95.90
& 468.11 \\

\midrule

SpikCommander~\cite{wang2026spikcommander}, 1L-16-256
& 1.12
& 96.71
& 487.97 \\

SpikCommander, 2L-16-256 (short)
& 2.13
& 96.27
& 645.63 \\

SpikCommander, 2L-16-256
& 2.13
& 96.92
& 940.70 \\

\midrule

\textbf{Falcon-Medium}
& \textbf{0.82}
& $\mathbf{96.68}\,/\,\mathbf{96.31}$
& $\mathbf{252.44}\,/\,\mathbf{119.64}$ \\

\textbf{Falcon, Large}
& \textbf{1.24}
& $\mathbf{96.91}\,/\,\mathbf{96.19}$
& $\mathbf{385.44}\,/\,\mathbf{184.24}$ \\

\bottomrule
\end{tabular}
\end{table}

\begin{table}[hbt]
\centering
\caption{Comparison with prior SNN speech-recognition methods.
Falcon reports \emph{Full-lookahead (direct conversion) / PDS--Accurate}.``--'' denotes an unreported or unverified value.
}
\label{tab:gsc_ssc_comparison}
\scriptsize
\setlength{\tabcolsep}{4pt}
\renewcommand{\arraystretch}{1.05}
\resizebox{\linewidth}{!}{%
\begin{tabular}{@{}lcccc@{}}
\toprule
& \multicolumn{2}{c}{SSC}
& \multicolumn{2}{c}{GSC} \\
\cmidrule(lr){2-3}
\cmidrule(lr){4-5}
Method / Configuration
& Params. (M) & Accuracy (\%)
& Params. (M) & Accuracy (\%) \\
\midrule
DCLS-Delays, 2L-2KC~\citep{hammouamri2024dcls}
& 1.40 & $80.16\pm0.09$
& 1.40 & $95.00\pm0.06$ \\

DCLS-Delays, 3L-2KC
& 2.50 & $80.69\pm0.21$
& 2.50 & $95.35\pm0.04$ \\

d-cAdLIF~\citep{deckers2024dcadlif}
& 0.70 & $80.23\pm0.07$
& 0.61 & $95.69\pm0.03$ \\

SNN-Delays+ TR/NAR~\citep{zhang2024temporalinformationreconstructionnonaligned}
& 2.50 & 81.02
& 2.50 & 95.62 \\

CADAD, 3L~\citep{bai2026cadad}
& 0.60 & $80.69\pm0.24$
& 0.60 & $95.58\pm0.15$ \\

SE-adLIF, 2L~\citep{baronig2025seadlif}
& 1.60 & $80.44\pm0.26$
& -- & -- \\

SpikeSCR + KDCL, 2L-16-256~\citep{wang2024efficient}
& 3.15 & 83.69
& 3.15 & 96.08 \\

SpikCommander, 1L-16-256~\citep{wang2026spikcommander}
& 1.12 & 83.26
& 1.12 & 96.71 \\

SpikCommander, 2L-16-256
& 2.13 & 83.49
& 2.13 & 96.92 \\

SNN-KWS~\citep{wang2024globallocal}
& -- & --
& 0.0802 & 92.90 \\

SIDC-KWS~\citep{lim2025sidc}
& -- & --
& 0.4028 & 94.70 \\

Spiking LMUFormer~\citep{liu2024lmuformer}
& -- & --
& 1.69 & 96.12 \\
\midrule
\textbf{Falcon-Medium}
& 0.97 & $\textbf{83.94}\,/\,83.02$
& 0.82 & $96.68\,/\,96.39$ \\

\textbf{Falcon-Large}
& -- & --
& 1.24 & $\textbf{96.91}\,/\,96.52$ \\
\bottomrule
\end{tabular}%
}
\end{table}

\section{Discussion and conclusion}
\subsection{Energy and throughput discussion}

Prior SNN studies commonly estimate energy by multiplying
operation counts by fixed per-operation energy
costs~\citep{wang2024efficient,wang2026spikcommander}.
Following this convention, we first estimate Falcon's arithmetic
energy under a hypothetical all-digital implementation.
We average operation counts over the complete GSC test set,
including the input encoder and final classifier, and use
precision-specific energy costs obtained from 22-nm RTL
synthesis and simulation.
Under this model, Falcon-Medium and Falcon-Large with the
\textit{Balanced} schedule require an estimated $0.01405$ and
$0.02135\,\mathrm{mJ}$ per inference, respectively.
For reference, SpikCommander reports $0.028$ and
$0.042\,\mathrm{mJ}$ for its one- and two-block
models~\citep{wang2026spikcommander}.

However, operation-count estimates cover only arithmetic energy,
excluding preprocessing, memory access, communication, and
operations such as exponentiation and comparison.
We therefore additionally evaluate data-dependent energy.
For the analog part, we use NeuroX with hardware parameters
matched to the 1-Mb ReRAM-based CIM macro~\cite{zhu_neurox,8910375}.
Digital latency is computed using cycle-based models at
$100\,\mathrm{MHz}$. Digital energy settings and activity sources are detailed in Appendix~\ref{sec: energybreak}.
Combining the analog and digital contributions yields modeled core
energies of $0.75$ and $1.06\,\mathrm{mJ}$ per inference for
Falcon-Medium and Falcon-Large, respectively, under the
\textit{Balanced} schedule. We show the detailed breakdown in Appendix~\ref{sec: energybreak}.

Throughput measures the number of samples completed per unit time
during continuous inference.
By reducing idle time between layers, cross-timestep pipelining
may allow the network to complete samples more frequently.
Whether this benefit is realized depends on resource sharing,
buffering, and the selected firing delays.
In particular, increasing $\delta$ may add inter-layer waiting,
but this reduces throughput only if it increases the average
time between consecutive sample completions.
Thus, completing one sample earlier does not necessarily mean
completing more samples per unit time.
This work focuses on the former: reducing modeled single-sample
core latency rather than improving throughput.

\subsection{Conclusion}
We introduced FALCON, which enables cross-timestep pipelining
through a delay-controlled IF kernel.
By overlapping computation across layers, SNNs can finish
earlier despite requiring more local processing rounds.
We also showed that longer waiting does not always improve
network accuracy.
Pipeline Delay Search therefore selects where and how long
to wait, followed by training under the selected delays.
Experiments on GSC and SSC demonstrate competitive recognition
accuracy.
Full-lookahead FALCON achieves competitive accuracy among
the compared SNNs, while the \textit{Balanced} schedules reduce
modeled core latency with small accuracy losses.
Overall, under the evaluated mapping, FALCON achieves lower modeled core
latency than matched bit-serial QNNs and provides a practical way to balance accuracy and latency through layer-wise firing delays.

\newpage
\section*{AI Use Statement}
The authors acknowledge the use of generative AI tools, including ChatGPT, during the preparation of this manuscript and the research process. These tools were used to assist with improving the readability and clarity of the manuscript, as well as to support preliminary drafting, code development, and research-related discussions. All AI-assisted content, code, and suggestions were carefully reviewed, revised, and validated by the authors. The authors take full responsibility for the accuracy, originality, and integrity of the final manuscript.
\bibliography{iclr2027_conference}
\bibliographystyle{iclr2027_conference}

\appendix
\clearpage
\raggedbottom
\setlength{\intextsep}{8pt plus 2pt minus 2pt}
\captionsetup{skip=6pt}
\section{Ablation study}

\subsection{PDS for Falcon-Medium on GSC and SSC }
\begin{figure}[H]
    \centering
    \includegraphics[width=\linewidth]{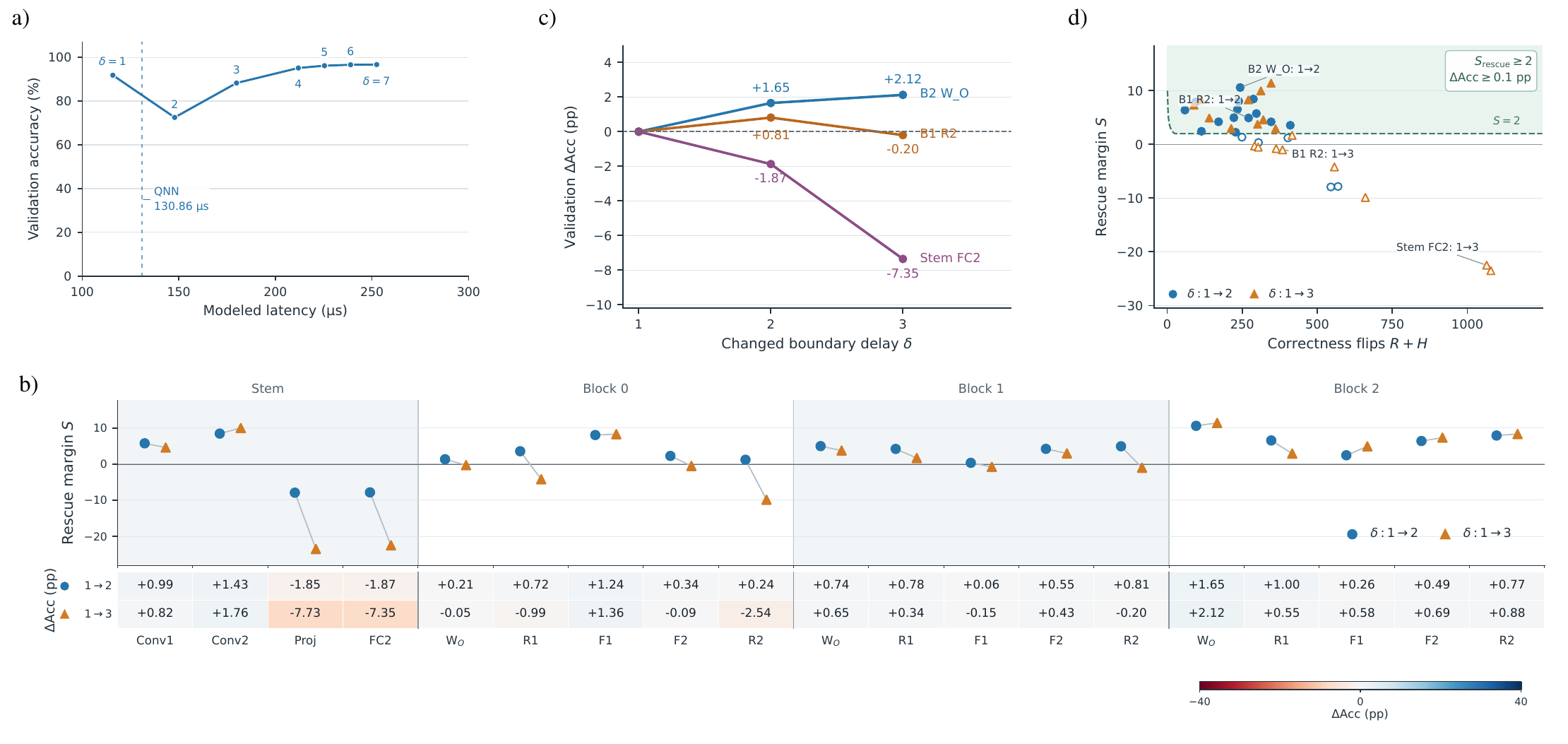}
\caption{Effects of firing delay on Falcon-Medium accuracy and latency (GSC).(a) Accuracy and latency under uniform delays.
(b) Layerwise accuracy changes for delay increases
$1\!\to\!2$ and $1\!\to\!3$.
(c) Three representative layer responses.
(d) PDS candidate screening, with delay increases passing
both score and accuracy-gain gates highlighted in green.}
    \label{fig:delay_effects_1}
\end{figure}

\begin{figure}[H]
    \centering
    \includegraphics[width=\linewidth]{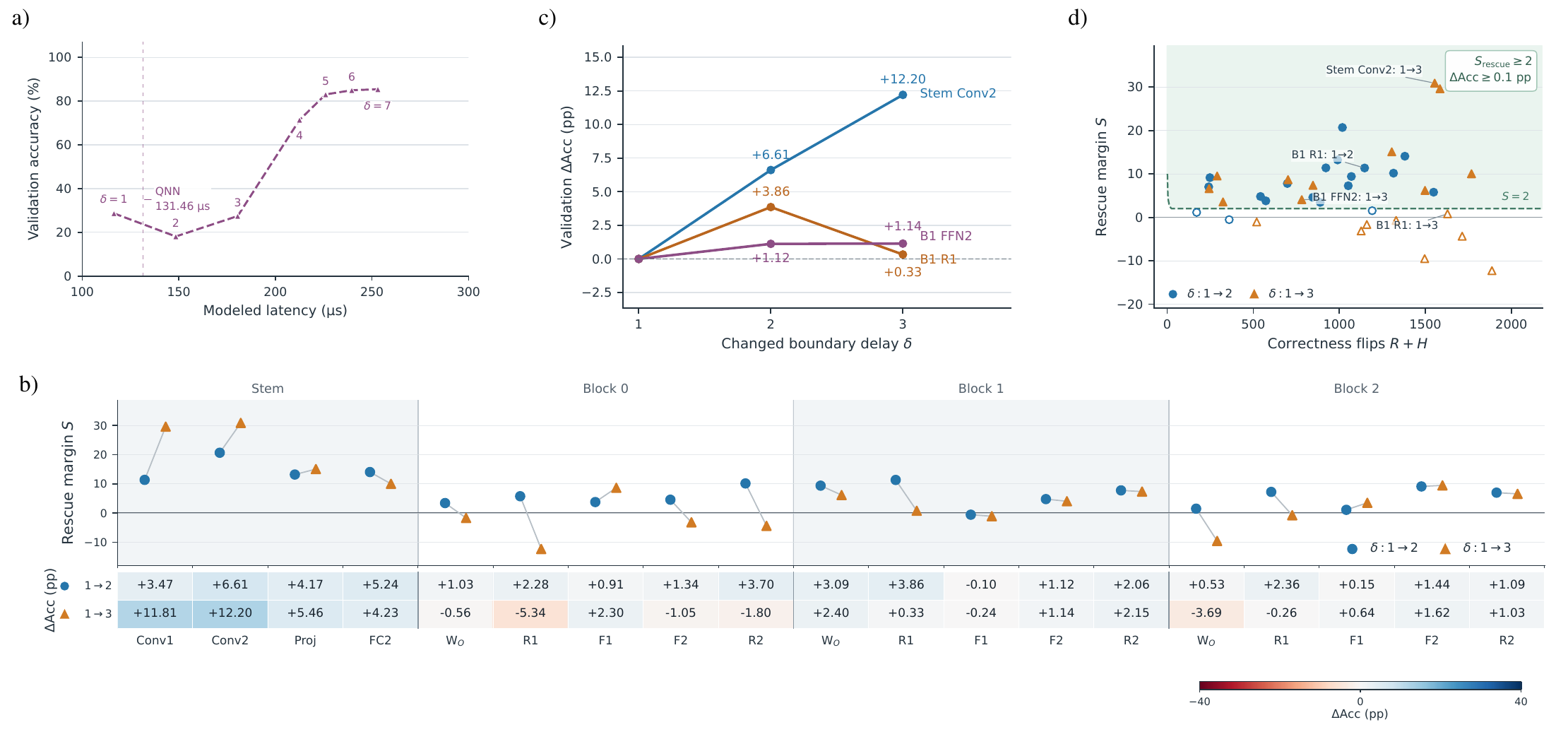}
\caption{Effects of firing delay on Falcon-Medium accuracy and latency (SSC).(a) Accuracy and latency under uniform delays.
(b) Layerwise accuracy changes for delay increases
$1\!\to\!2$ and $1\!\to\!3$.
(c) Three representative layer responses.
(d) PDS candidate screening, with delay increases passing
both score and accuracy-gain gates highlighted in green.}
    \label{fig:delay_effects_1_}
\end{figure}

\begin{figure}[H]
    \centering
    \includegraphics[width=\linewidth]{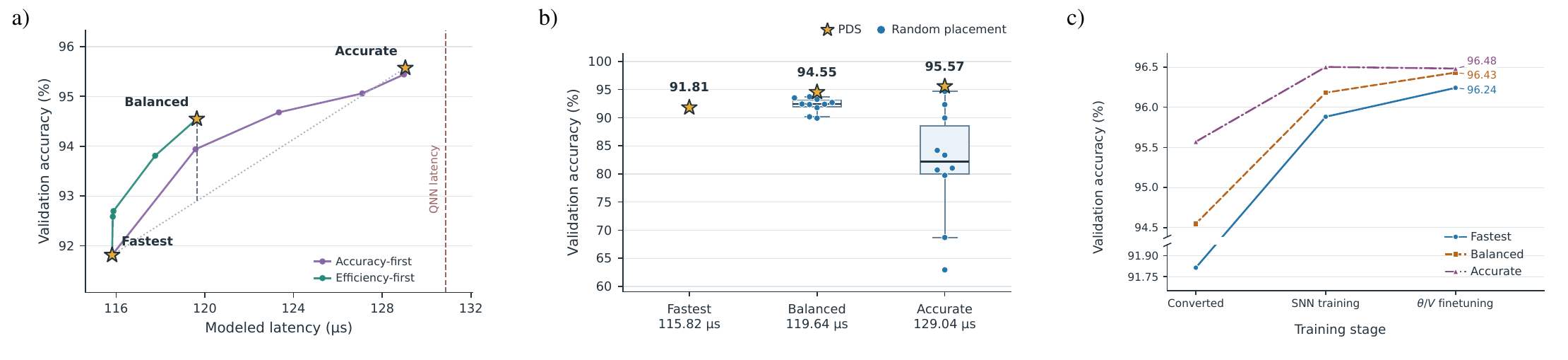}
    \caption{
   Delay search and training of Falcon-Medium on GSC.
(a) Validation accuracy and latency of Fastest, Balanced,
and Accurate before SNN training.
(b) Comparison with ten random delay schedules at the same
core latency before SNN training.
(c) Validation accuracy at each SNN training stage.
    }
    \label{fig:pds_adaptation_2}
\end{figure}

\begin{figure}[H]
    \centering
    \includegraphics[width=\linewidth]{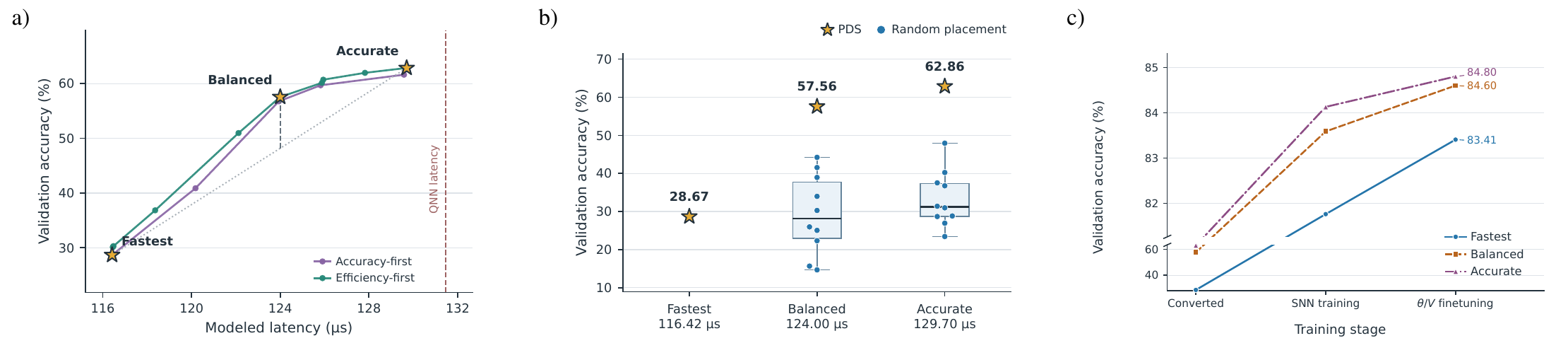}
    \caption{
   Delay search and training of Falcon-Medium on SSC.
(a) Validation accuracy and latency of Fastest, Balanced,
and Accurate before SNN training.
(b) Comparison with ten random delay schedules at the same
core latency before SNN training.
(c) Validation accuracy at each SNN training stage.
    }
    \label{fig:pds_adaptation_2_}
\end{figure}

\subsection{PDS schedules and QAT recovery}
\label{app:pds_post_training}

Figure~\ref{fig:qat30_validation_trajectories} shows the
30-epoch QAT validation curves for the PDS-selected
\textit{Balanced} schedules on GSC and SSC.
Most accuracy recovery occurs within the first ten epochs,
motivating a ten-epoch budget for the following comparison.

\begin{figure}[H]
    \centering
    \includegraphics[width=\linewidth]{
        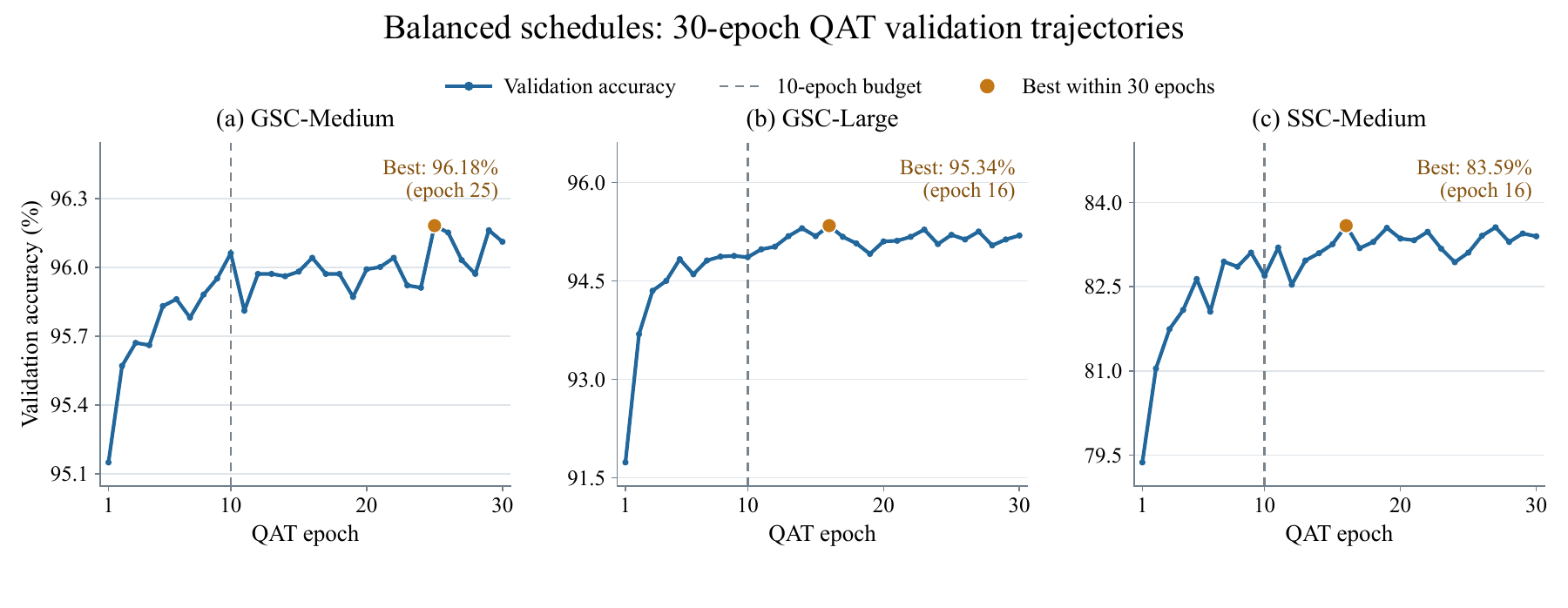
    }
    \caption{
        Validation accuracy during 30 epochs of QAT for
        PDS-selected \textit{Balanced} schedules:
        (a) GSC-Medium, (b) GSC-Large, and (c) SSC-Medium.
        Dashed lines mark the ten-epoch budget, and highlighted
        points indicate the highest validation accuracy
        within 30 epochs.
    }
    \label{fig:qat30_validation_trajectories}
\end{figure}
We compare the PDS-selected \textit{Balanced} schedule with five
random schedules for FALCON-Medium on GSC and SSC, and
Falcon-Large on GSC.
Within each setting, schedules share the same delay distribution,
modeled core latency, initialization, and ten-epoch QAT budget.
PDS achieves the highest validation accuracy at epoch ten
in all three settings.
On SSC-Medium, it also remains highest throughout training.

We further train the same five random schedules using the full
training pipeline: 30 epochs of QAT, 15 epochs of IF-parameter tuning,
and 10 epochs of joint training.
The PDS-selected \textit{Balanced} schedules achieve test accuracies of
$96.31\%$, $96.19\%$, and $83.02\%$ on GSC-Medium, GSC-Large, and
SSC-Medium, exceeding the corresponding random-schedule means by
$0.11$, $0.23$, and $0.92$ percentage points, respectively.
These results show that the benefits of PDS beyond early QAT recovery.

\begin{table}[t]
\centering
\caption{Test accuracy (\%) after full adaptation with
30 epochs of QAT, 15 epochs of neuron-parameter tuning,
and 10 epochs of joint training.
Each random schedule matches the delay distribution and
modeled core latency of the corresponding PDS Balanced schedule.
All runs use the same training seed, and checkpoints are
selected by validation accuracy.}
\label{tab:pds_random_full_adaptation}
\small
\setlength{\tabcolsep}{6pt}
\begin{tabular}{@{}lrrr@{}}
\toprule
Schedule & GSC Falcon-Medium & GSC Falcon-Large & SSC Falcon-Medium \\
\midrule
PDS              & 96.31 & 96.19 & 83.02 \\
Random 1         & 96.08 & 96.18 & 82.08 \\
Random 2         & 96.33 & 96.02 & 82.39 \\
Random 3         & 96.25 & 95.78 & 82.41 \\
Random 4         & 96.24 & 95.81 & 81.66 \\
Random 5         & 96.13 & 96.00 & 82.01 \\
\midrule
Random average   & 96.21 & 95.96 & 82.11 \\
PDS gain (pp)    & +0.11 & +0.23 & +0.92 \\
\bottomrule
\end{tabular}
\end{table}

\begin{figure}[H]
    \centering
    \includegraphics[width=\linewidth]{
        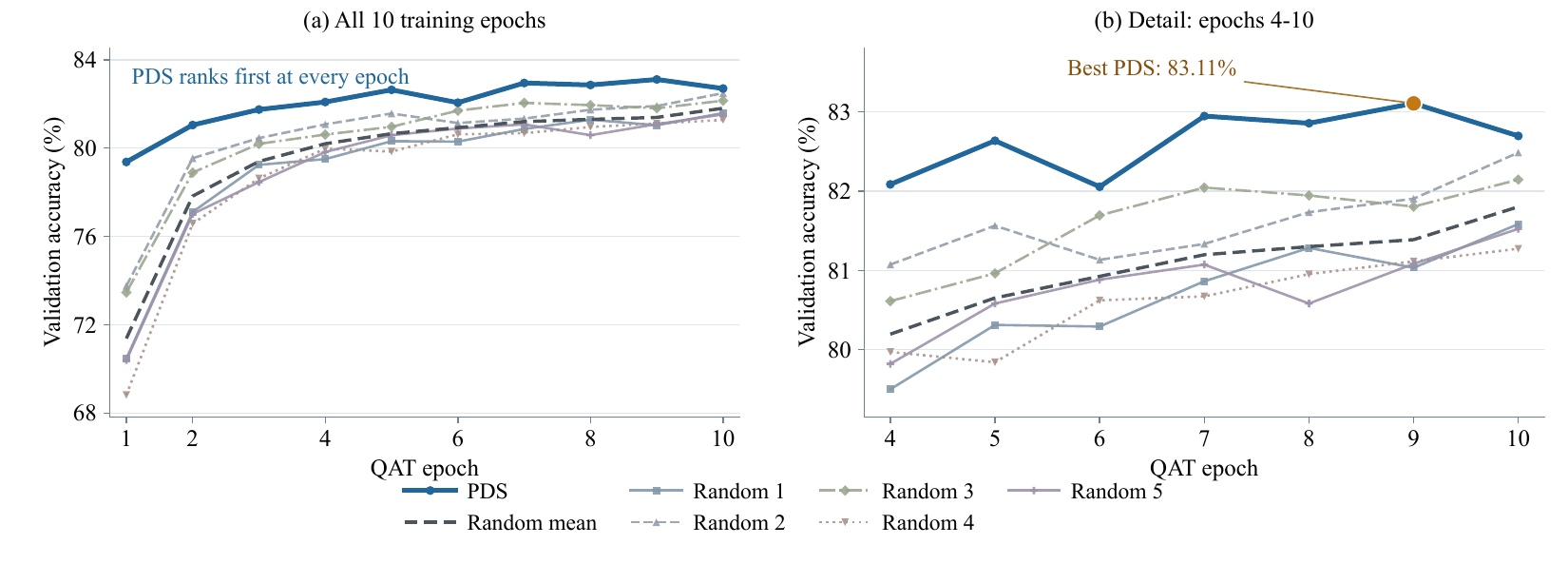
    }
    \caption{
        Equal-budget QAT comparison on SSC with FALCON-Medium.
        The PDS-selected \textit{Balanced} schedule and five
        random schedules share the same delay distribution
        and modeled core latency.
        (a) Validation accuracy over all ten training epochs.
        (b) A closer view of epochs 4--10.
    }
    \label{fig:ssc_pds_random_qat}
\end{figure}

\begin{figure}[H]
    \centering
    \includegraphics[width=\linewidth]{
        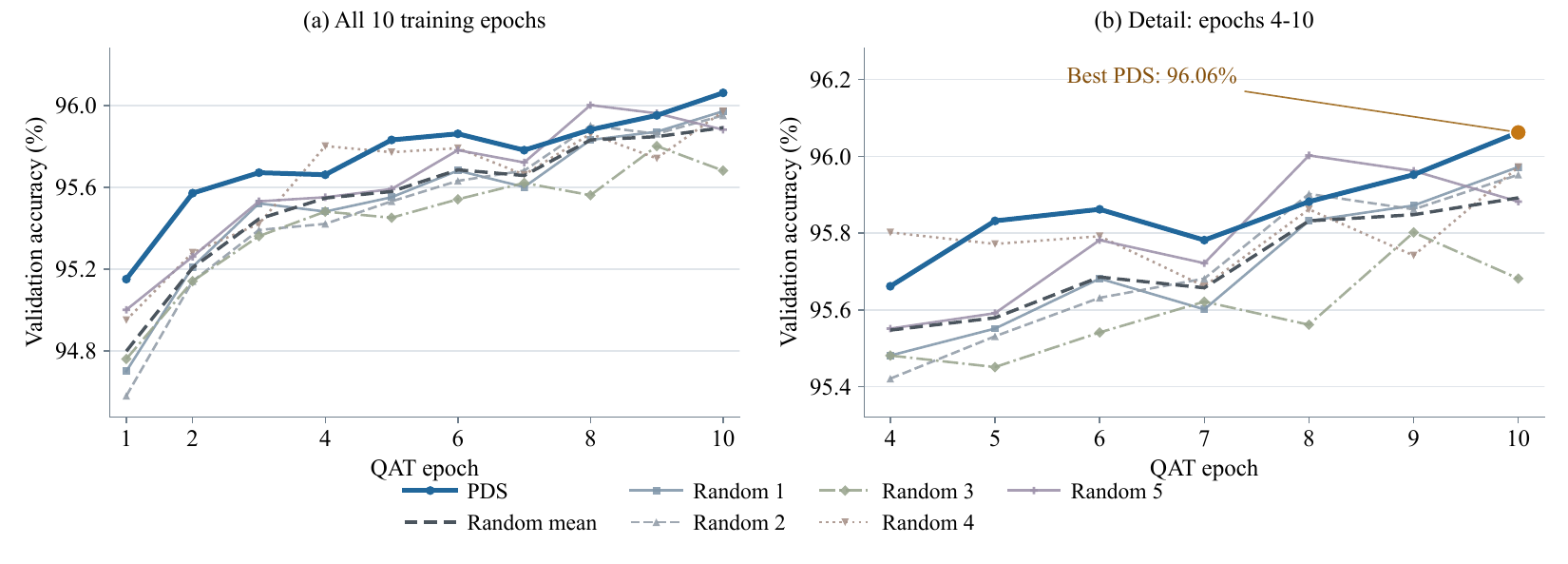
    }
    \caption{
        Equal-budget QAT comparison on GSC with FALCON-Medium.
        The PDS-selected \textit{Balanced} schedule and five
        random schedules share the same delay distribution
        and modeled core latency.
        (a) Validation accuracy over all ten training epochs.
        (b) A closer view of epochs 4--10.
    }
    \label{fig:gsc_m__pds_random_qat}
\end{figure}

\begin{figure}[H]
    \centering
    \includegraphics[width=\linewidth]{
        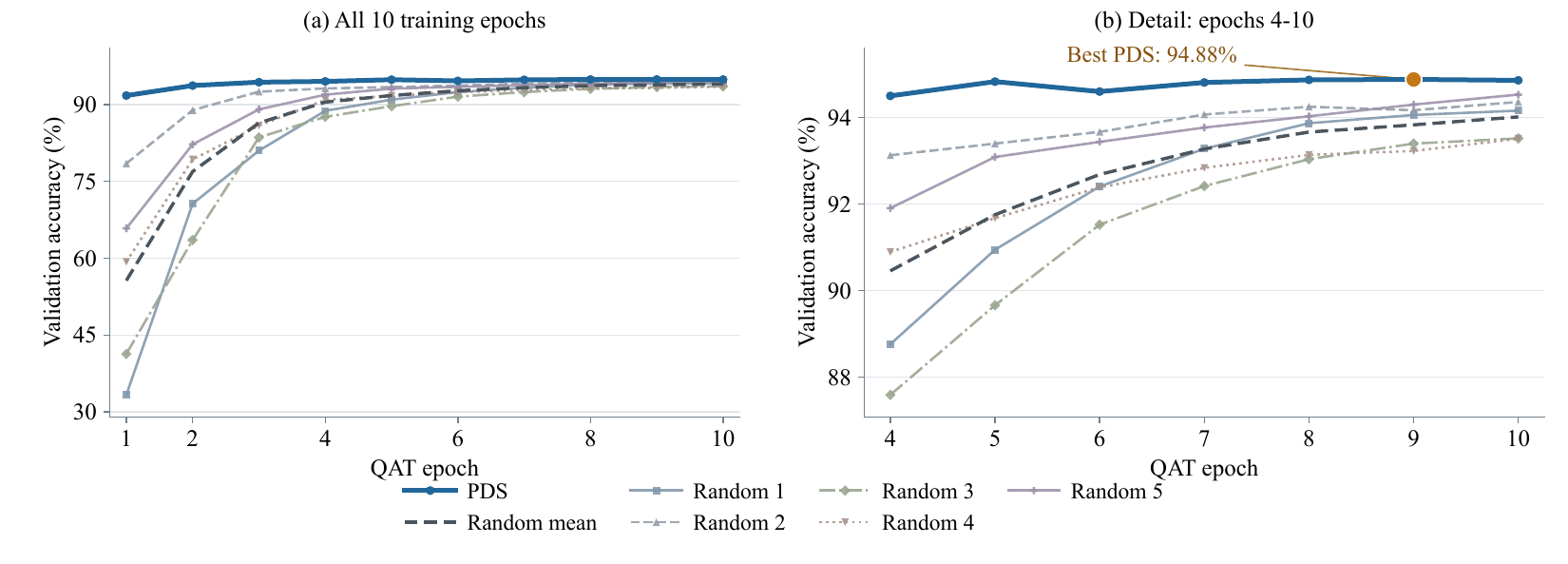
    }
    \caption{
        Equal-budget QAT comparison on GSC with Falcon-Large.
        The PDS-selected \textit{Balanced} schedule and five
        random schedules share the same delay distribution
        and modeled core latency.
        (a) Validation accuracy over all ten training epochs.
        (b) A closer view of epochs 4--10.
    }
    \label{fig:gsc_l_pds_random_qat}
\end{figure}
\clearpage
\subsection{Energy breakdown}
\label{sec: energybreak}
\paragraph{Analog-side energy.}
Using side-channel NeuroX profiling, we estimate the analog-side
inference energy of the exported Balanced models. FALCON-Medium and
Falcon-Large require an estimated $0.7436\,\mathrm{mJ}$ and
$1.0591\,\mathrm{mJ}$ per sample, respectively, including dynamic
and active-window static energy of the modeled Conv/Linear units
and their peripheral circuits
(Figure~\ref{fig:analog_energy_breakdown}). CABLC accounts for theen array-input supply branch, including
array conduction energy; ADC denotes analog-to-digital
conversion; REF supplies the ADC reference currents;
Control represents macro sequencing and control logic;
Readout (except ADC) combines current weighting (DSWCT),
input-bit accumulation (SINWP-SC), and positive/negative
current subtraction (PN-ISUB); and the tile accumulator
combines partial sums across input tiles.
\begin{figure}[H]
    \centering
    \includegraphics[width=\linewidth]{
        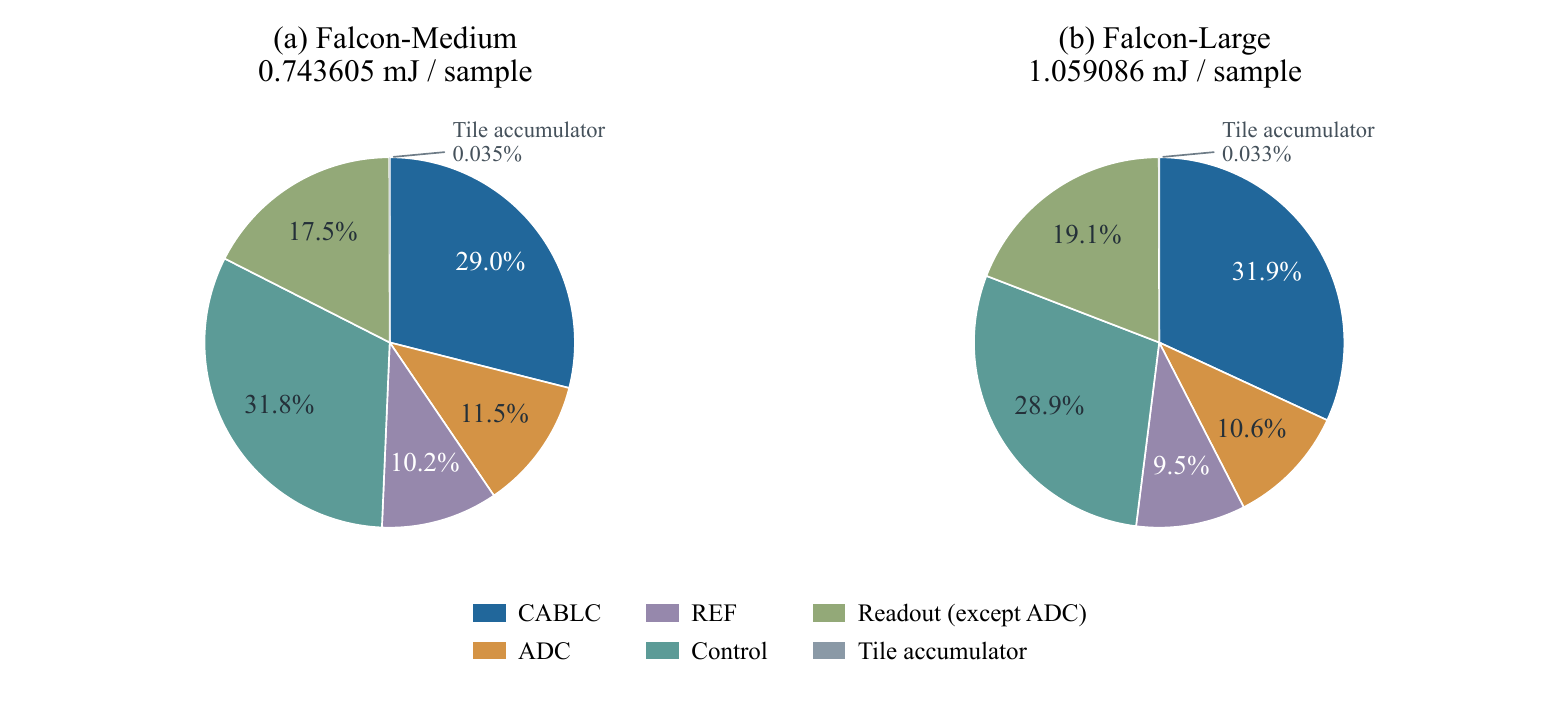
    }
    \caption{
        Analog energy breakdown.
    }
    \label{fig:analog_energy_breakdown}
\end{figure}

\paragraph{Digital attention energy estimation.}
We estimate the energy of $QK^\top$ and $AV$ using a
$32\times32$ output-stationary integer array.
The array is synthesized with Synopsys Design Compiler and
the standard-cell library at the TT corner,
$0.65\,\mathrm{V}$, $25^\circ\mathrm{C}$, and $100\,\mathrm{MHz}$.

\begin{figure}[H]
    \centering
    \includegraphics[width=\textwidth]
    {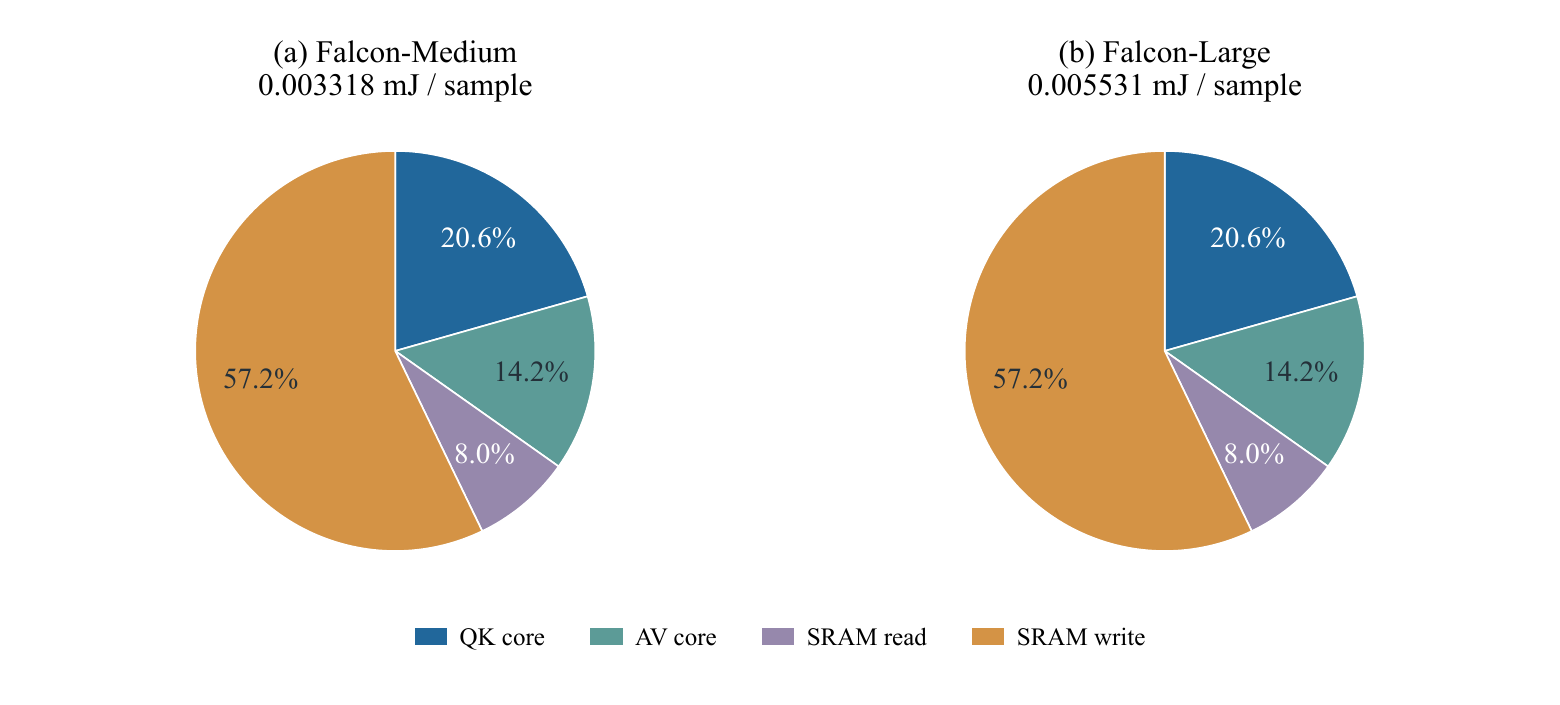}
    \caption{
      Digital energy breakdown.
        Core energy includes array arithmetic, internal registers,
        control, and leakage during the evaluation windows.
    }
    \label{fig:digital_energy_breakdown}
\end{figure}

\clearpage

\subsection{Q/K-prefix ablation}
\label{sec:results_prefix}

The main results use the full Q/K prefix, $p=7$.
We further reduce $p$ while keeping $T=7$ and the Falcon-Medium
Balanced delay schedule fixed.
The $p=7$ setting is used as a fixed reference and receives no
additional training.
For each $p<7$, we independently start from this reference
checkpoint and jointly adapt the permitted network and neuron
parameters for ten epochs.

Table~\ref{tab:qk_prefix_ablation} shows that a shorter Q/K prefix
can further reduce latency, but aggressive truncation is harder
to recover from.
Reducing $p$ from seven to six lowers core latency from
$119.64$ to $114.00\,\mu\mathrm{s}$, a $4.71\%$ reduction.
After adaptation, test accuracy reaches $96.33\%$, compared
with $96.31\%$ for the full-prefix reference.
With $p=5$, latency is reduced by $9.43\%$ while test accuracy
remains at $96.13\%$.
At $p=4$, adaptation recovers test accuracy from $84.65\%$
to $95.92\%$ at $102.72\,\mu\mathrm{s}$.
More aggressive reduction is harder to recover:
at $p=3$, adaptation raises test accuracy from $71.04\%$
to $91.15\%$, but it remains $5.16$ percentage points below
the full-prefix reference.
These results show that Q/K prefix reduction provides an
additional latency--accuracy trade-off.

\begin{table}[H]
\centering
\begin{threeparttable}
\caption{Q/K prefix ablation on GSC with Falcon-Medium.}
\label{tab:qk_prefix_ablation}
\small
\setlength{\tabcolsep}{4.5pt}
\renewcommand{\arraystretch}{1.08}

\begin{tabular}{@{}lrrrrr@{}}
\toprule
\shortstack{Q/K prefix\\$p$} &
\shortstack{Core latency\\($\mu$s)} &
\shortstack{Before val.\\(\%)} &
\shortstack{After val.\\(\%)} &
\shortstack{Before test\\(\%)} &
\shortstack{After test\\(\%)} \\
\midrule
7 (reference)
& 119.64 & 96.4332 & -- & 96.3108 & -- \\
6
& 114.00 & 95.9523 & 96.2930 & 96.1654 & 96.3289 \\
5
& 108.36 & 94.3192 & 96.1527 & 94.4480 & 96.1290 \\
4
& 102.72 & 85.0215 & 95.8120 & 84.6524 & 95.9200 \\
3
& 97.08 & 72.1872 & 91.6942 & 71.0404 & 91.1495 \\
\bottomrule
\end{tabular}

\end{threeparttable}
\end{table}

\section{Dataset and model architecture}
\subsection{Data preprocessing}
GSC contains 84,843 training, 9,981 validation, and 11,005 test
recordings. SSC contains 75,466 training, 9,981 validation, and
20,382 test recordings.
For GSC, we use Mel-spectrogram-based preprocessing,
similar to that used in Soul~\citep{du2026benchmarkingspikingneuralnetworks},
with the specific configuration described below. We crop or right-zero-pad each 16-kHz mono waveform
to one second, using random cropping during training and center cropping during evaluation when necessary. We compute a power spectrogram using a 480-point FFT, a 480-sample Hann window, a 160-sample hop, and no centered padding. We apply 64 Mel filters spanning 20~Hz to 8~kHz
and take the natural logarithm, yielding 98 acoustic frames with 64 features each.
Each log-Mel spectrogram is normalized using its own mean and standard deviation computed across all time-frequency entries.
For SSC, we use 10-ms time bins and sum every five adjacent cochlear channels, reducing 700 channels to 140, as in the channel reduction used by SpikCommander
\citep{wang2026spikcommander}. 

\subsection{Model Architecture}
\label{app:model_architecture}

Falcon-Medium and Falcon-Large share the same
architecture and Transformer dimensions, differing only in
the number of Transformer blocks: three for Medium and five
for Large. We evaluate Medium on both GSC and SSC, and Large
on GSC.

\paragraph{Input encoder and stem.}
The input encoder uses a $5\times5$ convolution with
$1\rightarrow32$ channels and stride $(2,1)$, followed by
batch normalization, rectification, and spike encoding.
The pipelined stem then contains two convolutional delay-controlled IF stages
and two fully connected (FC) delay-controlled IF stages.
The two convolutions use $3\times3$ kernels, with channel
dimensions $32\rightarrow48$ and $48\rightarrow48$, and
strides $(2,1)$ and $(1,1)$, respectively.
The strides are specified along the feature and temporal axes;
thus, the temporal token count is preserved.
After flattening the channel and feature axes at each token,
the two FC layers project the features to 160 dimensions
and then apply a $160\rightarrow160$ transformation.
The first FC layer has 768 input dimensions for GSC and
1680 for SSC, corresponding to input feature dimensions
of 64 and 140, respectively.

\paragraph{Transformer network and classifier.}

Each Transformer block uses a hidden dimension of 160 and
five attention heads, with 32 dimensions per head.
The attention module contains separate
$160\rightarrow160$ query, key, and value projections,
followed by a $160\rightarrow160$ output projection.
Attention uses ConSmax in place of softmax.
The feed-forward network consists of two FC layers,
$160\rightarrow320\rightarrow160$
The attention and feed-forward sublayers each have a residual
connection followed by batch normalization and delay-controlled IF firing.
After the final block, spike counts are decoded and batch
normalized, followed by temporal mean pooling and a
$160\rightarrow35$ classifier.
For variable-length SSC inputs, pooling excludes padded tokens.

\begin{table}[H]
\centering
\caption{Architecture summary of the two Falcon variants.
FC counts include the separate Q/K/V and output projections
in every attention module, both feed-forward layers, the
two stem FC layers, and the classifier.}
\label{tab:falcon_architecture}
\small
\begin{tabular}{lcc}
\toprule
Component & Falcon-Medium & Falcon-Large \\
\midrule
Input-encoder convolutions & 1 & 1 \\
Pipelined-stem convolutions & 2 & 2 \\
Pipelined-stem FC layers & 2 & 2 \\
Transformer blocks & 3 & 5 \\
Hidden dimension & 160 & 160 \\
Attention heads & 5 & 5 \\
FFN intermediate dimension & 320 & 320 \\
Classifier output classes & 35 & 35 \\
Total convolutional layers & 3 & 3 \\
Total FC layers & 21 & 33 \\
\bottomrule
\end{tabular}
\end{table}

\paragraph{Layer-wise firing delays.}
Table~\ref{tab:commitment_schedules} lists the firing delays for
the \textit{Fastest}, \textit{Balanced}, and \textit{Accurate}
schedules.
Each delay is assigned at an output spike boundary.
\textit{Fastest} uses $\delta=1$ at all searchable boundaries,
while \textit{Balanced} and \textit{Accurate} add waiting only
at the listed boundaries.
Conv1 and Conv2 denote the two post-encoder stem convolutions.
Within each Transformer block, AttnOut denotes the attention
output projection, and FFN1 and FFN2 denote the two feed-forward
linear layers.
Residual1 and Residual2 denote the normalized outputs of the
attention and feed-forward residual additions, respectively.
Blocks are numbered from zero.
All nine configurations use $T=7$ rate-coding slots and the
full Q/K prefix, $p=T$.
Value and Context use fixed full-lookahead delays ($\delta=T$)
and are excluded from PDS.
The input encoder and final classifier have no searchable
firing delays.

\begin{table}[H]
    \centering
\caption{
Layer-wise firing delays.
Only searchable boundaries with $\delta=2$ or $\delta=3$
are listed; all other searchable boundaries use $\delta=1$.
A dash indicates that no searchable boundary uses the
corresponding delay.
Transformer blocks are zero-indexed.
}
    \label{tab:commitment_schedules}
    \small
    \setlength{\tabcolsep}{5pt}
    \renewcommand{\arraystretch}{1.15}
    \begin{tabularx}{\textwidth}{@{}llXX@{}}
        \toprule
        Model & Schedule
        & Boundaries with $\delta=2$
        & Boundaries with $\delta=3$ \\
        \midrule

        GSC-Medium & Fastest
        & --- & --- \\

        & Balanced
        & Block 0: FFN1; Block 2: AttnOut
        & Block 2: Residual2 \\

        & Accurate
        & Stem Conv2
        & Blocks 0 and 1: FFN1;
          Block 2: AttnOut and Residual2 \\

        \addlinespace

        GSC-Large & Fastest
        & --- & --- \\

        & Balanced
        & Block 2: Residual1; Block 3: FFN1
        & Block 4: Residual2 \\

        & Accurate
        & Blocks 0, 1, 3, and 4: FFN1;
          Block 2: Residual1;
          Blocks 3 and 4: AttnOut
        & Stem Conv2;
          Block 2: FFN1;
          Block 4: Residual2 \\

        \addlinespace

        SSC-Medium & Fastest
        & --- & --- \\

        & Balanced
        & ---
        & Stem Conv1 and Conv2;
          Block 2: FFN2 \\

        & Accurate
        & Block 1: Residual1
        & Stem Conv1 and Conv2;
          Block 0: FFN1;
          Block 2: FFN2 and Residual2 \\

        \bottomrule
    \end{tabularx}
\end{table}

\clearpage
\section{Example pipeline configuration}

We illustrate a configuration selected by PDS for Falcon-Medium on GSC.
The model has four pipelined stem stages and three Transformer blocks,
with $T=p=7$. PDS searches delays $\delta \in \{1,2,3\}$ at 19 IF
stages. Q/K use the full input prefix, while Value and Context use
Full-lookahead.

The selected \emph{Balanced} schedule increases delays at only three
stages: $\delta=2$ at Block~0's FFN1 and Block~2's attention output
projection, and $\delta=3$ at Block~2's second residual.
All other searchable stages use $\delta=1$.

Figure~\ref{fig:gsc_middle_balanced_pipeline_detail} shows the overlap
across modules. Block~0 starts at $7.64\,\mu\mathrm{s}$, before the
stem produces its last output slot at $18.92\,\mu\mathrm{s}$.
Q/K/V projections run in parallel, while AV waits for both the
attention map and the complete Value count.
Under spatial mapping, the modeled core latency is
$119.64\,\mu\mathrm{s}$, compared with $130.86\,\mu\mathrm{s}$
for the matched QNN, an $8.6\%$ reduction.
\begin{figure}[H]
    \centering
    \includegraphics[width=1\linewidth]{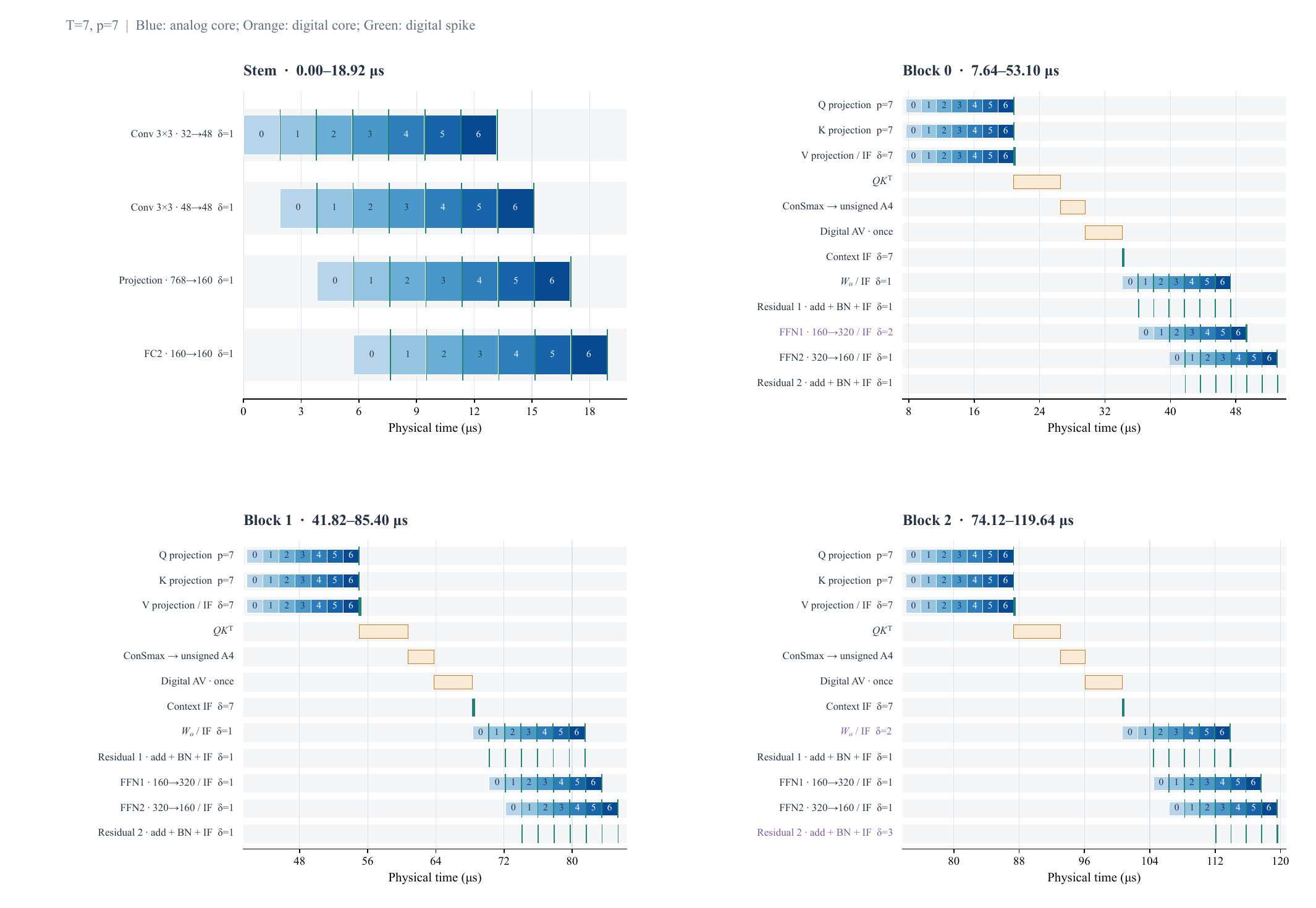}
\caption{Latency Example of Falcon-Medium with balanced schedule
}
\label{fig:gsc_middle_balanced_pipeline_detail}
\end{figure}

\clearpage
\section{Proof of Theorem~\ref{thm:local_count}}
\label{app:proof_local_count}

\begin{proof}
We omit the neuron and layer indices.
Let
\[
J(t)=\frac{I_0}{T}+I(t),
\qquad
P(r)=V(-1)+\sum_{t=0}^{r}J(t).
\]
After receiving input slots $0,\ldots,r$ and emitting $n$ spikes,
the membrane is
\[
P(r)-n\theta,
\]
where $\theta>0$.

\paragraph{Adjacent delays.}
First compare $\delta$ and $\delta+1$, with $\delta<T$.
Their decision prefixes are
\[
\begin{aligned}
\delta:\quad&
\delta-1,\delta,\ldots,T-2,
\underbrace{T-1,\ldots,T-1}_{\delta\text{ times}},
\\
\delta+1:\quad&
\delta,\delta+1,\ldots,T-2,
\underbrace{T-1,\ldots,T-1}_{\delta+1\text{ times}}.
\end{aligned}
\]
Thus, the second execution removes the first decision and adds one
full-input decision at the end.
The other $T-1$ decisions use the same prefixes in the same order.

Before they begin, the $\delta$ execution has emitted zero or one spike,
while the $\delta+1$ execution has emitted none.
If their counts are equal, their membranes and firing decisions are
equal.
If the $\delta$ execution leads by one spike, its membrane is exactly
$\theta$ lower.
It cannot fire unless the other execution also fires.
Its lead therefore stays at zero or one after every common decision.

Let
\[
n=\widehat q(\delta)
\]
and let $n'$ be the count for delay $\delta+1$ before its last decision.
Then
\[
n\in\{n',n'+1\},
\qquad
0\leq n'\leq T-1.
\]

\paragraph{The final decision.}
The full-lookahead count is
\[
Q:=\widehat q_{\mathrm{full}}
=
\min\left\{
T,\;
\max\left\{
0,\;
\left\lfloor
\frac{P(T-1)}{\theta}
\right\rfloor
\right\}
\right\}.
\]
Since $n'\leq T-1$, a final full-input decision emits a spike exactly
when $n'<Q$. Therefore,
\[
\widehat q(\delta+1)
=
n'+\mathbf 1[n'<Q].
\]

If $n'<Q$, then
\[
n
\leq
\widehat q(\delta+1)=n'+1
\leq Q.
\]
If $n'\geq Q$, then
\[
Q
\leq
\widehat q(\delta+1)=n'
\leq n.
\]
Thus, the new count lies between the old count and $Q$, and changes
by at most one.
Consequently,
\[
0\leq e(\delta)-e(\delta+1)\leq1.
\]

\paragraph{General delays and tightness.}
Summing the adjacent-delay inequality gives
\[
0
\leq
e(\delta)-e(\delta')
\leq
\delta'-\delta.
\]
Since $e(T)=0$, setting $\delta'=T$ gives
\[
e(\delta)\leq T-\delta.
\]

To show tightness, choose $V(-1)=\theta/2$ and let all input current
arrive at slot $T-1$, with
\[
T\theta<P(T-1)<(T+1)\theta.
\]
Every earlier decision emits no spike.
Delay $\delta$ leaves exactly $\delta$ decisions after the final current
arrives, and all of them emit a spike.
Hence,
\[
\widehat q(\delta)=\delta,
\qquad
Q=T,
\qquad
e(\delta)=T-\delta.
\]
Both upper bounds are attained.
\end{proof}

\section{Proof of Proposition~\ref{prop:accuracy_nonmonotone}}
\label{app:proof_accuracy_nonmonotone}

\begin{proof}
Fix any
\[
1\leq\delta_1<\delta_2<\delta_3\leq T.
\]
We construct a network with five input channels, three IF neurons
in layer $l$, and two IF neurons in layer $l+1$.
All IF neurons follow Algorithm~\ref{alg:rate_if}, with threshold
$1$ and initial membrane potential $1/2$.
All IF-layer biases and static currents are zero.
Only the delay $\delta$ of layer $l$ varies; the delay of
layer $l+1$ is fixed at $1$.

\paragraph{Fixed weights and inputs.}
We represent an input sample by an integer code vector
$\mathbf c=(c_1,\ldots,c_5)^\top$, where
$c_i\in\{0,\ldots,T\}$ specifies the number of input spikes
in channel $i$.
Each channel places these spikes in its first $c_i$ slots:
\[
s_i^{\mathrm{in}}(t;\mathbf c)
=
\mathbf 1[t<c_i],
\qquad
t=0,\ldots,T-1.
\]
Thus, $c_i$ is an integer count, whereas
$s_i^{\mathrm{in}}(t;\mathbf c)$ is a binary spike.

We use zero input baselines, unit scales, and fixed weights
\[
W^l
=
\begin{pmatrix}
1&0&0&0&0\\
0&1&-1&0&0\\
0&0&0&1&-1
\end{pmatrix},
\qquad
W^{l+1}
=
\begin{pmatrix}
1&-1&0\\
1&0&-1
\end{pmatrix},
\]
where rows index output neurons and columns index input channels.

The two input samples are
\[
\mathbf c^{(1)}
=
(1,\delta_2,\delta_2-1,\delta_3,\delta_3-1)^\top,
\qquad
\mathbf c^{(0)}
=
(1,1,0,0,0)^\top,
\]
with class labels $1$ and $0$, respectively.
The superscripts identify the class labels.
All entries lie in $\{0,\ldots,T\}$.
Once $\delta_1,\delta_2,\delta_3$ are chosen, both input samples
remain fixed as $\delta$ varies.

For an input $\mathbf c$, let
$\widehat q_j^{l+1}(\mathbf c;\delta)$ denote the final spike
count of neuron $j$ in layer $l+1$ when layer $l$ uses delay
$\delta$.
The fixed readout predicts
\[
F_\delta(\mathbf c)
=
\mathbf 1\!\left[
\widehat q_1^{l+1}(\mathbf c;\delta)
-
\widehat q_2^{l+1}(\mathbf c;\delta)
+
\frac12
>0
\right].
\]
The readout weights and bias do not change with $\delta$.
Within each input case below, we omit the input argument
from the currents, spikes, and counts.

\paragraph{The label-$1$ input.}
For $\mathbf c^{(1)}$, the currents entering layer $l$ are
\[
J_1^l(t)=\mathbf 1[t=0],
\qquad
J_2^l(t)=\mathbf 1[t=\delta_2-1],
\qquad
J_3^l(t)=\mathbf 1[t=\delta_3-1].
\]
This follows from the fixed weights and the identity
\[
\mathbf 1[t<a]-\mathbf 1[t<a-1]
=
\mathbf 1[t=a-1]
\]
for integer $t$ and $a$.

Each neuron receives one unit of current.
At the first decision that includes this input, its membrane
potential is $3/2$, so it emits one spike.
Soft reset returns the membrane potential to $1/2$.
Since no further current arrives, it emits no additional spikes.

A unit current arriving at input slot $a$ is first included
in output decision
\[
k=\max\{a-\delta+1,0\}.
\]
The output spikes are therefore
\[
\begin{aligned}
s_1^l(k;\delta)
&=
\mathbf 1[k=0],\\
s_2^l(k;\delta)
&=
\mathbf 1[k=\max\{\delta_2-\delta,0\}],\\
s_3^l(k;\delta)
&=
\mathbf 1[k=\max\{\delta_3-\delta,0\}].
\end{aligned}
\]
Here, $k$ is a logical output-slot index, not physical time.
The final count vector is $(1,1,1)$ for every
$\delta\in\{1,\ldots,T\}$, including Full-lookahead.
Thus, every neuron in layer $l$ has zero local count error.

At layer $l+1$, the two neurons receive
\[
J_1^{l+1}(k)
=
s_1^l(k;\delta)-s_2^l(k;\delta),
\qquad
J_2^{l+1}(k)
=
s_1^l(k;\delta)-s_3^l(k;\delta).
\]
This layer uses delay $1$ and sums the currents within each
input slot before making a firing decision.
If the positive input arrives before the negative input,
the membrane potential rises from $1/2$ to $3/2$ and the
neuron emits one spike.
The later negative current cannot remove that spike.
If both inputs arrive in the same slot, they cancel before
the firing decision, and the neuron emits no spike.

The final counts at layer $l+1$ are therefore
\[
\left(
\widehat q_1^{l+1},
\widehat q_2^{l+1}
\right)
=
\begin{cases}
(1,1), & \delta<\delta_2,\\
(0,1), & \delta_2\leq\delta<\delta_3,\\
(0,0), & \delta\geq\delta_3.
\end{cases}
\]
The readout scores are $1/2$, $-1/2$, and $1/2$,
respectively.
Hence,
\[
F_\delta(\mathbf c^{(1)})
=
\begin{cases}
1, & \delta<\delta_2,\\
0, & \delta_2\leq\delta<\delta_3,\\
1, & \delta\geq\delta_3.
\end{cases}
\]

\paragraph{The label-$0$ input.}
For $\mathbf c^{(0)}$, the first two neurons in layer $l$
each receive one unit of current at input slot $0$,
and the third receives none.
For every delay, their output spikes are
\[
s_1^l(k;\delta)
=
s_2^l(k;\delta)
=
\mathbf 1[k=0],
\qquad
s_3^l(k;\delta)=0.
\]
Their final counts are $(1,1,0)$, equal to the
full-lookahead counts.

At layer $l+1$, the first neuron's inputs cancel in slot $0$,
while the second neuron receives one positive unit of current.
The final count vector is therefore $(0,1)$ for every delay.
The readout score is $-1/2$, giving
\[
F_\delta(\mathbf c^{(0)})=0.
\]
This input is always classified correctly.

\paragraph{Accuracy.}
Take the fixed dataset
\[
\mathcal D
=
\left\{
(\mathbf c^{(1)},1),
(\mathbf c^{(0)},0)
\right\}.
\]
Every neuron in layer $l$ has zero local count error on both
inputs for every $\delta\in\{1,\ldots,T\}$.
However, the accuracy is
\[
\operatorname{Acc}_{\mathcal D}(\delta)
=
\begin{cases}
1,
& \delta<\delta_2\ \text{or}\ \delta\geq\delta_3,\\
\frac12,
& \delta_2\leq\delta<\delta_3.
\end{cases}
\]
Hence,
\[
\operatorname{Acc}_{\mathcal D}(\delta_1)
=
1
>
\operatorname{Acc}_{\mathcal D}(\delta_2)
=
\frac12
<
\operatorname{Acc}_{\mathcal D}(\delta_3)
=
1.
\]
Since $T\geq\delta_3$, we also have
$\operatorname{Acc}_{\mathcal D}(T)=1$.

All IF membrane potentials at firing decisions are half-integers,
since the initial potential is $1/2$ and all currents and resets
are integers.
They therefore differ from the threshold $1$ by at least $1/2$.
The result does not depend on threshold ties.
\end{proof}

\section{Latency for related works }
\label{app:latency_reconstruction}

\subsection{Digital Latency Estimation for SpikCommander}
\label{app:spikcommander_latency}

We estimate SpikCommander's modeled core latency from its
released computation graph rather than from timestep count alone.
We use the same timing boundary as FALCON.
SpikCommander uses 16 shared $32\times32$ systolic arrays,
whereas FALCON uses five arrays of the same size.
The array counts follow the respective numbers of attention
heads, giving SpikCommander more digital arrays than FALCON.

\paragraph{Hardware and cost model.}
We map SpikCommander to 16 shared $32\times32$ systolic arrays and
16 shared 32-lane vector engines running at $100\,\mathrm{MHz}$ same with our Falcon setting.
The hardware resources are fixed for all layers.
For a GEMM $(m\times k)(k\times n)$, the cycle count is
\[
C_{\mathrm{GEMM}}(m,k,n)
=
\left\lceil\frac{m}{32}\right\rceil
\left\lceil\frac{n}{32}\right\rceil k + 62 ,
\]
where the last 62 cycles account for filling and draining the systolic
array.
Element-wise operations, reductions, BN, LIF, residual additions, and
grouped convolutions are also included using the same 32-lane digital
model. The acoustic sequence is processed in groups of at most 32 frames,
matching the 32 rows of the systolic array.
Output-channel tiles are distributed across the available arrays.

SpikCommander's MSTASA computes per-frame Q/K/V features together with
local and global temporal gates.
The local gate for frame $t$ uses a radius-20 window,
\[
\mathbf g_t^{\mathrm{loc}}
=
\mathrm{LIF}\!\left(
\beta_{\mathrm{loc}}
\sum_{u\in[t-20,t+20]}
(\mathbf Q_u+\mathbf K_u)
\right),
\]
whereas the global gate uses the complete sequence,
\[
\mathbf g^{\mathrm{glob}}
=
\mathrm{LIF}\!\left(
\beta_{\mathrm{glob}}
\sum_{u=0}^{L-1}
(\mathbf Q_u+\mathbf K_u)
\right).
\]
The attention output is
\[
\mathbf A_t
=
\mathbf g_t^{\mathrm{loc}}\odot\mathbf V_t
+
\mathbf g^{\mathrm{glob}}\odot\mathbf V_t .
\]

Therefore, the local Q/K path and the convolutional V path can start
before the complete sequence is ready, while the global gate must wait
for all Q/K frames in each block. Each operation starts once its required inputs are ready and a hardware engine is available. This allows neighboring layers and blocks to overlap.

 Thus, after optimization, the estimated latencies under different timestep settings are shown below: 
For the 2L-16-256 model with $T=100$, the second block can start processing
early acoustic-frame groups before the first block has fully finished,
resulting in a latency of $940.70\,\mu\mathrm{s}$.
For the 1L-16-256 model with the same $T=100$, the latency is
$487.97\,\mu\mathrm{s}$.
We additionally evaluate the 2L-16-256 model with a shorter temporal
sequence of $T=50$, which reduces the latency to
$645.63\,\mu\mathrm{s}$.

\subsection{Digital Latency Estimation for SpikeSCR}
\label{app:spikescr_latency}

We use the same digital cost model and scheduling method as for SpikCommander: 16 shared $32\times32$ systolic arrays and
16 shared 32-lane vector engines at $100\,\mathrm{MHz}$.  
We reconstruct the 1L-16-256 model with one SGLE block,
16 attention heads, hidden dimension 256, and FFN dimension 1024.

Q, K, and V in Spikescr are produced by Linear, BN, and LIF operations. Q and K then pass through RoPE and a second LIF.
For each of the 16 heads,
\[
Q,K,V\in\mathbb{R}^{L\times16}.
\]
For a query-row group $\mathcal R$, with $|\mathcal R|\leq32$,
SpikeSCR computes
\[
S_{\mathcal R}=Q_{\mathcal R}K^\top,
\qquad
O_{\mathcal R}=S_{\mathcal R}V.
\]
A QK task waits for its query rows and the complete K sequence,
while the corresponding AV task additionally waits for the complete
V sequence. Different heads and query-row groups can overlap when
their inputs and hardware engines are ready. Using the GEMM model defined above, one 32-column QK tile costs
$C_{\mathrm{QK}}
=
C_{\mathrm{GEMM}}(|\mathcal R|,16,32)
=
16+62
=
78
$
cycles. The corresponding AV task costs
$
C_{\mathrm{AV}}
=
C_{\mathrm{GEMM}}(|\mathcal R|,L,16)
=
L+62
$
cycles.

As in SpikCommander, an operation starts once both its required inputs
and a hardware engine are available. This allows neighboring modules
and acoustic-frame groups to overlap while preserving the full-K,
full-V, convolution-window, neuron-state, and residual dependencies.

For the 1L-16-256 SpikeSCR model with $T=40$ , the resulting latency is $278.47\,\mu\mathrm{s}$.
With a longer temporal sequence of $T=100$, the
latency increases to $468.11\,\mu\mathrm{s}$.

\subsection{Digital Latency Estimation for BSO Spiking VGG-11}
\label{app:spiking_vgg_latency}

We use the same digital cost model and scheduler as for
SpikCommander: 16 shared $32\times32$ systolic arrays and
16 shared 32-lane vector engines at $100\,\mathrm{MHz}$.
We use the released \texttt{online\_spiking\_vgg11\_ws} structure
with the GSC input size of $32\times32$ and $T=4$.
The neuron settings follow the public training code:
LIF neurons with $\tau=2$, threshold 1, and soft reset.
These settings define our reconstruction.
We exclude the first convolution and its spike-generation step.
The four encoded feature maps, each of size $64\times32\times32$,
are available at time zero.
The timed network contains the remaining seven convolutions,
three average-pooling layers, LIF updates, and output scaling.
Under the same scheduling policy, a task starts when its inputs
and an engine are ready. Convolutions wait for their required
spatial neighborhoods, and each neuron update waits for its
previous-step state.
The resulting modeled core latency is $1714.88\,\mu\mathrm{s}$.


\end{document}

%% file: math_commands.tex
\usepackage{amsmath,amsfonts,bm}

\def\eqref#1{equation~\ref{#1}}

\def\1{\bm{1}}

\DeclareMathAlphabet{\mathsfit}{\encodingdefault}{\sfdefault}{m}{sl}
\SetMathAlphabet{\mathsfit}{bold}{\encodingdefault}{\sfdefault}{bx}{n}

